\newtheoremstyle{thmm}
  {.5ex}
  {}
  {}
  {0cm}
  {\itshape}
  {.}
  {1ex}
  {}
\theoremstyle{thmm}
\newtheorem{thm}{Theorem}
\newtheorem{lem}{Lemma}
\newtheorem{alg}{Algorithm}
\newtheorem{rem}{Assumption}
\newenvironment{theorem}%
{\vspace{.1cm}\begin{flushright}\begin{minipage}{14.5cm}\begin{thm}}%
{\end{thm}\end{minipage}\end{flushright}\vspace{.1cm}}%
\newenvironment{lemma}%
{\vspace{.1cm}\begin{flushright}\begin{minipage}{14.5cm}\begin{lem}}%
{\end{lem}\end{minipage}\end{flushright}\vspace{.1cm}}%
\newenvironment{algorithm}%
{\vspace{.1cm}\begin{flushright}\begin{minipage}{14.5cm}\begin{alg}}%
{\end{alg}\end{minipage}\end{flushright}\vspace{.1cm}}%
\newenvironment{remark}%
{\begin{flushright}\begin{minipage}{14.5cm}\begin{rem}}%
{\end{rem}\end{minipage}\end{flushright}}%
\newcommand\E {E}
\newcommand{\vek}[1]{\mathbf{#1}}
\newcommand{\mc}[1]{\mathrm{#1}}
\newcommand{\vekg}[1]{\boldsymbol{#1}}
\newcommand{\ssh}{P}
\newcommand{\cov}{\mathrm{Cov}}
\newcommand{\argmin}{\mathrm{argmin}\,}
\newcommand{\cor}{\mathrm{Cor}}
\newcommand{\var}{\mathrm{Var}}
\def\d{\mathrm{d}}
\def\e{\mathrm{e}}
\def\1{\mathrm{1}}
\title{\textbf{Independent screening for single-index hazard rate  \\ models with ultra-high dimensional features}}
\author{Anders Gorst-Rasmussen \thanks{Department of Mathematical Sciences, Aalborg University, Denmark. Email: gorst@math.aau.dk}
\and
Thomas H. Scheike \thanks{Department of Biostatistics, University of Copenhagen, Denmark}}
\begin{document}

\maketitle

\renewcommand{\abstractname}{Summary} 
\begin{abstract}
  \noindent In data sets with many more features than observations, independent
  screening based on all univariate regression models leads to a
  computationally convenient variable selection method. Recent efforts
  have shown that in the case of generalized linear models,
  independent screening may suffice to capture all relevant features
  with high probability, even in ultra-high dimension. It is unclear
  whether this formal sure screening property is attainable when the
  response is a right-censored survival time. We propose a
  computationally very efficient independent screening method for
  survival data which can be viewed as the natural survival equivalent
  of correlation screening. We state conditions under which the method
  admits the sure screening property within a general class of
  single-index hazard rate models with ultra-high dimensional
  features. An iterative variant is also described which combines
  screening with penalized regression in order to handle more complex
  feature covariance structures. The methods are evaluated through
  simulation studies and through application to a real gene expression
  data~set.
\end{abstract}


\section{Introduction}
\label{sec:intro}
With the increasing proliferation of  biomarker
studies, there is a need for efficient methods for relating a
survival time response to a large number of features. In typical
genetic microarray studies, the sample size $n$ is measured in
hundreds whereas the number of features $p$ per sample can be in
excess of millions.  Sparse regression techniques such as lasso
\citep{tibshirani97:_lasso_cox} and SCAD \citep{fan01:_variab} have
proved useful for dealing with such high-dimensional features but
their usefulness diminishes when $p$ becomes extremely large compared
to $n$. The notion of NP-dimensionality
\citep{fan09:_non_concav_penal_likel_np_dimen} has been conceived to
describe such ultra-high dimensional settings which are formally
analyzed in an asymptotic regime where $p$ grows at a non-polynomial
rate with~$n$. Despite recent progress \citep{bradic11:_penal},
theoretical knowledge about sparse regression techniques under
NP-dimensionality is still in its infancy. Moreover, NP-dimensionality
poses substantial computational challenges. When for example pairwise
interactions among gene expressions in a genetic microarray study are
of interest, the dimension of the feature space will trouble even the
most efficient algorithms for fitting sparse regression models.  A
popular ad hoc solution is to simply pretend that feature correlations
are ignorable and resort to computationally swift univariate
regression methods; so-called independent screening methods.

In an important paper, \cite{fan08:_sure} laid the formal foundation
for using independent screening to distinguish `relevant' features
from `irrelevant' ones. For the linear regression model they showed
that, when the design is close to orthogonal, a superset of the true
set of nonzero regression coefficients can be estimated consistently
by simple hard-thresholding of feature-response correlations. This
sure independent screening (SIS) property of correlation screening is
a rather trivial one, if not for the fact that it holds true in the
asymptotic regime of NP-dimensionality. Thus, when the feature
covariance structure is sufficiently simple, SIS methods can overcome
the noise accumulation in extremely high dimension. In order to
accommodate more complex feature covariance structures
\cite{fan08:_sure} and \cite{fan09:_ultrah_dimen_featur_selec}
developed heuristic, iterated methods combining independent screening
with forward selection techniques. Recently, \cite{fan10:_sure_np}
extended the formal basis for SIS to generalized linear models.

In biomedical applications, the response of interest is often a
right-censored survival time, making the study of screening methods
for survival data an important one. \cite{fan10:_borrow_stren}
investigated SIS methods for the Cox proportional hazards model based
on ranking features according to the univariate partial log-likelihood
but gave no formal
justification. \cite{tibshirani09:_univar_shrin_cox_model_high_dimen_data}
suggested soft-thresholding of univariate Cox score statistics with
some theoretical justification but under strong assumptions. Indeed,
independent screening methods for survival data are apt to be
difficult to justify theoretically due to the presence of censoring
which can confound marginal associations between the response and the
features. Recent work by \cite{zhao10:_princ_sure_indep_screen_cox}
contains ideas which indicate that independent screening based on the
Cox model may have the SIS property in the absence of censoring.

In the present paper, we depart from the standard approach of studying
SIS as a rather specific type of model misspecification in which the
univariate versions of a particular regression model are used to infer
the structure of the joint version of the same particular regression
model. Instead, we propose a survival variant of independent screening
based on a model-free statistic which we call the `Feature Aberration
at Survival Times' (FAST) statistic. The FAST statistic is a simple
linear statistic which aggregates across survival times the aberration
of each feature relative to its time-varying average. Independent
screening based on this statistic can be regarded as a natural
survival equivalent of correlation screening.  We study the SIS
property of FAST screening in ultra-high dimension for a general class
of single-index hazard rate regression models in which the risk of an
event depends on the features through some linear functional. A key
aim has been to derive simple and operational sufficient conditions
for the SIS property to hold. Accordingly, our main result states that
the FAST statistic has the SIS property in an ultra-high dimensional
setting under covariance assumptions as in
\cite{fan09:_ultrah_dimen_featur_selec}, provided that censoring is
essentially random and that features satisfy a technical condition
which holds when they follow an elliptically contoured distribution.
Utilizing the fact that the FAST statistic is related to the
univariate regression coefficients in the semiparametric additive
hazards model (\cite{lin94:_semip_analy}; \cite{mckeague94}), we
develop methods for iterated SIS. The techniques are evaluated in a
simulation study where we also compare with screening methods for the
Cox model \citep{fan10:_borrow_stren}. Finally, an application to a
real genetic microarray data set is presented.

\section{The FAST statistic and its motivation}
\label{sec:classical}
Let $T$ be a survival time which is subject to right-censoring by some
random variable $C$.  Denote by $N(t):=\1(T\land C \leq t \land T \leq
C)$ the counting process which counts events up to time $t$, let
$Y(t):=\1(T \land C \geq t)$ be the at-risk process, and let $\vek{Z}
\in \mathbb{R}^p$ denote a random vector of explanatory variables or
features. It is assumed throughout that $\vek{Z}$ has finite variance
and is standardized, i.e.~centered and with a covariance matrix
$\vekg{\Sigma}$ with unit diagonal. We observe $n$ independent and
identically distributed (i.i.d.) replicates of $\{(N_i,Y_i,\vek{Z}_i)\,:\,0
\leq t \leq \tau\}$ for $i=1,\ldots,n$ where $[0,\tau]$ is the
observation time window.

Define the `Feature Aberration at Survival Times' (FAST) statistic as
follows:
\begin{equation}
\label{eq:dev-at-event-defin}
  \vek{d} := n^{-1} \int_0^\tau \sum_{i=1}^n \{\vek{Z}_i-\bar{\vek{Z}}(t)\} \d N_i(t); 
\end{equation}
where $\bar{\vek{Z}}$ is the at-risk-average of the $\vek{Z}_i$s, 
\begin{displaymath}
  \bar{\vek{Z}}(t):=\frac{\sum_{i=1}^n \vek{Z}_iY_i(t)}{\sum_{i=1}^n Y_i(t)}. 
\end{displaymath}
Components of the FAST statistic define basic measures of the
marginal association between each feature and survival. In the following,
we provide two motivations for using the FAST statistic for screening
purposes. The first, being model-based, is perhaps the most intuitive
-- the second shows that, even in a model-free setting, the FAST
statistic may provide valuable information about marginal
associations.

\subsection{A model-based interpretation of the FAST statistic}
\label{sec:mod-based-inter}
Assume in this section that the $T_i$s have hazard
functions of the form
\begin{equation}
\label{eq:addriskmod}
 \lambda_j(t)= \lambda_0(t)+\vek{Z}_j^\top \vekg{\alpha}^0; \qquad j=1,2,\ldots,n;
\end{equation}
with $\lambda_0$ an unspecified baseline hazard rate and
$\vekg{\alpha}^0 \in \mathbb{R}^p$ a vector of regression
coefficients. This is the so-called semiparametric additive hazards
model (\cite{lin94:_semip_analy}; \cite{mckeague94}), henceforth
simply the Lin-Ying model.  The Lin-Ying model corresponds to assuming
for each $N_j$ an intensity function of the form
$Y_j(t)\{\lambda_0(t)+\vek{Z}_j^\top \vekg{\alpha}^0 \}$.  From the
Doob-Meyer decomposition $\d N_j(t)= \d M_j(t) +
Y_j(t)\{\lambda_0(t)+\vek{Z}_j^\top \vekg\alpha^0 \} \d t$ with $M_j$
a martingale, it is easily verified that
\begin{equation}
\label{eq:decomp-of-ahaz} 
  \sum_{i=1}^n\{\vek{Z}_i-\bar{\vek{Z}}(t)\}\d N_i(t) = \Big[\sum_{i=1}^n\{\vek{Z}_i-\bar{\vek{Z}}(t)\}^{\otimes 2}Y_i(t) \d t\Big]\vekg{\alpha}^0  + \sum_{i=1}^n\{\vek{Z}_i-\bar{\vek{Z}}(t)\} \d M_i(t), \quad t \in [0,\tau].
\end{equation}
This suggests that $\vekg\alpha^0$ is estimable as the solution to the
$p \times p$ linear system of equations
\begin{equation}
\label{eq:linahaz}
  \vek{d}=\vek{D}\vekg{\alpha};
\end{equation}
where
\begin{equation}
 \label{eq:def-of-smalldn-bigdn}
  \vek{d}:=n^{-1}\sum_{i=1}^n \int_0^\tau \{ \vek{Z}_i-\bar{\vek{Z}}(t)\} \d N_i(t), \quad \textrm{and  }\vek{D}:=n^{-1}\sum_{i=1}^n \int_0^\tau Y_i(t)\{ \vek{Z}_i-\bar{\vek{Z}}(t)\}^{\otimes 2} \d t.
\end{equation}
Suppose $\hat{\vekg\alpha}$ solves  \eqref{eq:linahaz}.
Standard martingale arguments \citep{lin94:_semip_analy}
imply root $n$ consistency of~$\hat{\vekg\alpha}$,
\begin{equation}
\label{eq:lin-ying-rootn-consistency}
  \sqrt{n}(\hat{\vekg\alpha}-\vekg\alpha^0) \stackrel{d}{\to} \mathrm{N}(0,\vek{D}^{-1} \vek{B} \vek{D}^{-1}), \quad \textrm{ where } \vek{B}=n^{-1}\sum_{i=1}^n\int_0^\tau \{ \vek{Z}_i-\bar{\vek{Z}}(t)\}^{\otimes 2} \d N_i(t).
\end{equation}
For now, simply observe that the left-hand side of \eqref{eq:linahaz}
is exactly the FAST statistic; whereas $d_{j}D_{jj}^{-1}$ for
$j=1,2,\ldots,p$ estimate the regression coefficients in the
corresponding $p$ univariate Lin-Ying models. Hence we can
interpret $\vek{d}$ as a (scaled) estimator of the univariate regression
coefficients in a working Lin-Ying~model.

A nice heuristic interpretation of $\vek{d}$ results from the
pointwise signal/error decomposition~\eqref{eq:decomp-of-ahaz} which
is essentially a reformulated linear regression model $\vek{X}^\top
\vek{X} \vekg{\alpha}^0 + \vek{X}^\top \vekg{\varepsilon}=\vek{X}^\top
\vek{y}$ with `responses' $y_j:=\d N_j(t)$ and `explanatory variables'
$\vek{X}_j:=\{\vek{Z}_j-\bar{\vek{Z}}(t)\}Y_j(t)$. The FAST statistic is given by the time average of $\E\{\vek{X}^\top
\vek{y}\}$ and may accordingly be viewed as a survival
equivalent of the usual predictor-response correlations.

\subsection{A model-free interpretation of the FAST statistic}
For a feature to be judged (marginally) associated with survival in
any reasonable interpretation of survival data, one would first
require that the feature is correlated with the probability of
experiencing an event -- second, that this correlation persists
throughout the time window. The FAST statistic can be shown to reflect
these two requirements when the censoring mechanism is sufficiently
simple.

Specifically, assume administrative censoring at time $\tau$ (so that
$C_1 \equiv \tau$). Set $V(t):=\mathrm{Var}\{F(t|\vek{Z}_1)\}^{1/2}$ where
$F(t| \vek{Z}_1):=\ssh(T_1 \leq t|\vek{Z}_1)$ denotes the conditional probability of death
before time~$t$.  For each $j$, denote by
$\delta_j$ the population version of $d_{j}$ (the in
probability limit of $d_{j}$ when $n \to \infty$). Then
\begin{align*}
  \delta_j &= \E \Big(\Big[Z_{1j}-\frac{\E\{Z_{1j}Y_1(t)\} }{\E\{Y_1(t)\}} \Big]\1(T_1 \leq t \land \tau)\Big) \\
  &=\E\{Z_{1j} F(\tau|\vek{Z}_1)\} - \int_0^\tau \frac{\E\{Z_{1j}Y_1(t)\} }{\E\{Y_1(t)\}} \E\{\d F(t|\vek{Z}_1)\}\\
&= V(\tau)\cor\{Z_{1j},F(\tau|\vek{Z}_1)\}+\int_0^\tau \cor\{Z_{1j},F(t|\vek{Z}_1)\}  \frac{V(t)}{\E\{Y_1(t)\}} \E\{\d F(t|\vek{Z}_1)\}.
\end{align*}
We can make the following observations:
\begin{enumerate}
\item[(i)] If $\cor\{Z_{1j},F(t|\vek{Z}_1)\}$ has
constant sign throughout $[0,\tau]$, then $|\delta_{j}|
\geq |V(\tau)\cor\{Z_{1j},F(\tau|\vek{Z}_1)\}|$.
\item[(ii)] Conversely, if $\cor\{Z_{1j},F(t|\vek{Z}_1)\}$ changes sign, so
  that the the direction of association with $F(t|\vek{Z}_1)$ is not persistent
  throughout $[0,\tau]$, then this will lead to a smaller value of
  $|\delta_{j}|$ compared to (i).
\item[(iii)]Lastly, if $\cor\{Z_{1j},F(t|\vek{Z}_1)\} \equiv 0$ then $\delta_{j}=0$.
\end{enumerate}
In other words, the sample version $d_{j}$ estimates a time-averaged summary
of the correlation function $t \mapsto \cor\{Z_{1j},F(t|\vek{Z}_1)\}$
which takes into account both  magnitude and persistent behavior 
throughout $[0,\tau]$.  This indicates that the FAST statistic is
relevant for judging marginal association of features with survival
beyond the model-specific setting of
Section~\ref{sec:mod-based-inter}.

\section{Independent screening with the FAST statistic}
In this section, we extend the heuristic arguments of the
previous section and provide theoretical justification for using the
FAST statistic to screen for relevant features when the data-generating model belongs to a  class of single-index hazard
rate regression models.

\subsection{The general case of single-index hazard rate models}
In the notation of Section \ref{sec:classical}, we assume survival times $T_j$ to have hazard rate functions of single-index form:
\begin{equation}
\label{eq:model-for-haz}
  \lambda_j(t) = \lambda(t,\vek{Z}_j^\top \vekg{\alpha}^0), \quad j=1,2,\ldots,n.
\end{equation}
Here $\lambda \colon [0,\infty) \times \mathbb{R} \to [0,\infty)$ is a
continuous function, $\vek{Z}_1,\ldots,\vek{Z}_n$ are random vectors
in $\mathbb{R}^{p_n}$, $\vekg \alpha^0 \in \mathbb{R}^{p_n}$ is a
vector of regression coefficients, and $\vek{Z}_j^\top
\vekg{\alpha}^0$ defines a risk score. We subscript $p$ by $n$ to
indicate that the dimension of the feature space can grow with the
sample size. Censoring will always be assumed at least independent so
that $C_j$ is independent of $T_j$ conditionally on $\vek{Z}_j$. We
impose the following assumption on the hazard `link
function'~$\lambda$:
\begin{remark}
\label{assumption:monotonicity}
  The survival function $\exp\{-\int_0^t \lambda(s,\,\cdot\,) \d s\}$ is strictly monotonic for each $t \geq 0$.
\end{remark}
\noindent Requiring the survival function to depend monotonically on $\vek{Z}_j^\top
\vekg{\alpha}^0$ is natural in order to enable interpretation of the
components of $\vekg{\alpha}^0$ as indicative of positive or negative
association with survival.  Note that it suffices that
$\lambda(t,\,\cdot\,)$ is strictly monotonic for each $t \geq
0$. Assumption \ref{assumption:monotonicity} holds for a range of
popular survival regression models. For example,
$\lambda(t,x):=\lambda_0(t)+x$ with $\lambda_0$ some baseline hazard
yields the Lin-Ying model~\eqref{eq:addriskmod};
$\lambda(t,x):=\lambda_0(t)\e^x$ is a Cox model; and
$\lambda(t,x):=\e^{x}\lambda_0(t\e^{x})$ is an accelerated failure
time model.

Denote by $\vekg\delta$ the population version of the FAST statistic
under the model~\eqref{eq:model-for-haz} which, by the Doob-Meyer
decomposition $\d N_1(t)= \d M_1(t) + Y_1(t)\lambda(t,\vek{Z}_1^\top
\vekg\alpha^0)\d t$ with $M_1$ a martingale, takes the form
\begin{equation}
\label{eq:cp-decom}
  \vekg{\delta} = \E\Big[\int_0^\tau \{\vek{Z}_1-\vek{e}(t)\}Y_1(t) \lambda(t,\vek{Z}_1^\top \vekg\alpha^0) \d t\Big]; \qquad \textrm{where }   \vek{e}(t):= \frac{\E\{\vek{Z}_1 Y_1(t)\}}{\E\{Y_1(t)\}}.
\end{equation}
Our proposed FAST screening procedure is as follows: given some (data-dependent) threshold $\gamma_n>0$,
\begin{enumerate}
\item[(i).] calculate the FAST statistic $\vek{d}$ from the available data and
\item[(ii).] declare the `relevant
  features' to be the set  $\{1 \leq j \leq p_n\,:\, |d_{j}|>\gamma_n\}$.
\end{enumerate}
By the arguments in Section \ref{sec:classical}, this procedure
defines a natural survival equivalent of correlation screening. Define the following sets of features:
\begin{align*}
  \widehat{\mc{M}}_{d}^n&:= \{1 \leq j \leq p_n \,:\, |d_{j}| >  \gamma_n \}, \\
  \mc{M}^n &:= \{1 \leq j \leq p_n\,:\, \alpha_j^0 \neq 0\}, \\
      \mc{M}_{\delta}^n&:= \{1 \leq j \leq p_n \,:\, \delta_{j} \neq 0\}.
\end{align*}
The problem of establishing the SIS property of FAST screening amounts
to determining when $\mc{M}^n \subseteq
\widehat{\mc{M}}_\d^n$ holds with large probability for
 large $n$. This translates into two questions: first,
when do we have $\mc{M}_\delta^n \subseteq
\widehat{\mc{M}}_d^n$; second, when do we have $\mc{M}^n
\subseteq \mc{M}_\delta^n$?  The first question is essentially
model-independent and requires establishing an exponential bound for
$n^{1/2}|d_{j}-\delta_{j}|$ as $n\to \infty$. The second question is
strongly model-dependent and is answered by manipulating expectations
under the single-index model~(\ref{eq:model-for-haz}).

We state the main results here and relegate proofs to the appendix where we also state various regularity conditions. The
following principal assumptions, however, deserve separate attention:
\begin{remark}
\label{assumption:lr}
  There exists $\vek{c} \in \mathbb{R}^{p_n}$ such that $\E(\vek{Z}_1|\vek{Z}_1^\top \vekg{\alpha}^0)=\vek{c}\vek{Z}_1^\top \vekg\alpha^0$.
\end{remark}
\begin{remark}
\label{assumption:ran-cens}
   The censoring time $C_1$ depends on $T_1,\vek{Z}_1$ only through $Z_{1j}$, $j \notin \mc{M}^n$.
\end{remark}
\begin{remark}
\label{assumption:po}
  $Z_{1j}$, $j \in \mc{M}^n$
  is independent of $Z_{1j}$, $j \notin \mc{M}^n$.
\end{remark}
Assumption \ref{assumption:lr} is a `linear regression' property which holds true for Gaussian features and, more
generally, for features following an elliptically contoured
distribution \citep{hardin82}. In view of  \cite{hall93}
which states that most low dimensional projections of high dimensional
features are close to linear, Assumption~\ref{assumption:lr} may not
be unreasonable a priori even for general feature
distributions when $p_n$ is large.

Assumption \ref{assumption:ran-cens} restricts the censoring mechanism
to be partially random in the sense of depending only on irrelevant
features. As we will discuss in detail below, such rather strong
restrictions on the censoring distribution seem necessary for
obtaining the SIS property; Assumption \ref{assumption:ran-cens} is
both general and convenient.

Assumption \ref{assumption:po} is the partial orthogonality condition
also used by \cite{fan10:_sure_np}. Under this assumption and Assumption \ref{assumption:ran-cens}, it follows
from~\eqref{eq:cp-decom} that $\delta_{j}=0$ whenever $j \notin
\mc{M}^n$, implying $\mc{M}_{\delta}^n \subseteq
\mc{M}^n$. Provided that we also have $\delta_{j} \neq 0$ for $j \in
\mc{M}^n$ (that is, $\mc{M}^n \subseteq
\mc{M}_\mathrm{pre}^n$), there exists a threshold $\zeta_n>0$ such
that
\begin{displaymath}
  \min_{j \in \mc{M}^n}|\delta_{j}| \geq \zeta_n \qquad  \max_{j \notin \mc{M}^n}|\delta_{j}|=0.
\end{displaymath}
Consequently, Assumptions
\ref{assumption:ran-cens}-\ref{assumption:po} are needed to enable
consistent model selection via independent screening. Although model
selection consistency is not essential in order to capture just some
superset of the relevant features via independent screening, it is pertinent in order to limit
the size of such a superset.

The following theorem on  FAST screening (FAST-SIS) is our
main theoretical result. It states that the screening property
$\mc{M}^n \subseteq \widehat{\mc{M}}_d^n$ may hold with
large probability even when $p_n$ grows exponentially fast in a
certain power of $n$ which depends on the tail behavior of
features. The covariance condition in the theorem is analogous to that of \cite{fan10:_sure_np} for SIS in generalized linear models
with Gaussian features.
\begin{theorem}
\label{thm:mainthm}
Suppose that Assumptions \ref{assumption:monotonicity}-\ref{assumption:ran-cens} hold alongside the regularity conditions
of the appendix and that $\ssh(|Z_{1j}|>s) \leq l_0\exp(-l_1 s^\eta)$
for some positive constants $l_0,l_1,\eta$ and sufficiently large $s$.  Suppose
moreover that  for some $c_1>0$ and $\kappa<1/2$, 
    \begin{equation}
      \label{eq:covariance-in-mainthm}
      |\cov[Z_{1j},\vek{Z}_1^\top \vekg{\alpha}^0\}]| \geq c_1n^{-\kappa}, \quad j \in \mc{M}^n.
    \end{equation}
    Then $\mc{M}^n \subseteq \mc{M}^n_\delta$. Suppose in
    addition that $\gamma_n=c_2 n^{-\kappa}$ for some constant $0<c_2
    \leq c_1/2$ and that $\log
    p_n=o\{n^{(1-2\kappa)\eta/(\eta+2)}\}$. Then the SIS property
    holds, $\ssh(\mc{M}^n \subseteq \widehat{\mc{M}}_d^n)
    \to 1$ when $n \to \infty$.
\end{theorem}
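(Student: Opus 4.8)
The plan is to split the proof into the two parts foreshadowed by the decomposition $\mc{M}^n \subseteq \mc{M}_\delta^n \subseteq \widehat{\mc{M}}_d^n$, handling the model-dependent inclusion first and the concentration argument second. For the first part, $\mc{M}^n \subseteq \mc{M}_\delta^n$, I would start from the population expression \eqref{eq:cp-decom} for $\vekg\delta$ and show that, for $j \in \mc{M}^n$, the condition \eqref{eq:covariance-in-mainthm} forces $\delta_j \neq 0$. The key device is Assumption \ref{assumption:lr}: since $\E(\vek{Z}_1 \mid \vek{Z}_1^\top\vekg\alpha^0) = \vek{c}\,\vek{Z}_1^\top\vekg\alpha^0$, and since the integrand $Y_1(t)\lambda(t,\vek{Z}_1^\top\vekg\alpha^0)$ is (via Assumption \ref{assumption:ran-cens}, making censoring depend only on irrelevant coordinates, together with the independent-censoring setup) a function of the single index alone after taking the relevant conditional expectation, one can condition on $\vek{Z}_1^\top\vekg\alpha^0$ inside the integral and reduce $\delta_j$ to something proportional to $\cov\{Z_{1j},\,g(\vek{Z}_1^\top\vekg\alpha^0)\}$ for a scalar transformation $g$ built from $\lambda$ and the survival function. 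Assumption \ref{assumption:monotonicity} is what guarantees $g$ is monotone (hence its covariance with the index does not vanish), and a standard correlation-inequality / regression-coefficient identity converts $\cov\{Z_{1j}, g(\text{index})\}$ into a nonzero multiple of $\cov\{Z_{1j}, \vek{Z}_1^\top\vekg\alpha^0\}$, which is bounded below by $c_1 n^{-\kappa}$. This simultaneously gives the quantitative lower bound $\min_{j\in\mc{M}^n}|\delta_j| \geq \zeta_n$ with $\zeta_n$ of order $n^{-\kappa}$, and one can arrange constants so that $\zeta_n \geq c_1 n^{-\kappa} \geq 2\gamma_n$.

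For the second part I would establish the uniform deviation bound: there exist constants such that
\begin{equation*}
  \ssh\Big(\max_{1\le j\le p_n}|d_j - \delta_j| > \tfrac{1}{2}\gamma_n\Big) \le p_n \cdot \ssh\big(|d_j-\delta_j| > \tfrac12 c_2 n^{-\kappa}\big),
\end{equation*}
by a union bound, and then show each term on the right decays fast enough that multiplication by $p_n$ still tends to zero given $\log p_n = o\{n^{(1-2\kappa)\eta/(\eta+2)}\}$. To get the per-coordinate exponential bound I would write $d_j$ in the form \eqref{eq:dev-at-event-defin}, i.e.\ $d_j = n^{-1}\int_0^\tau \sum_i \{Z_{ij}-\bar Z_j(t)\}\,\d N_i(t)$, and decompose the difference $d_j - \delta_j$ into (a) a centered empirical average of i.i.d.\ bounded-in-distribution integrals of the form $\int_0^\tau\{Z_{1j} - e_j(t)\}\,\d N_1(t)$ minus its mean, plus (b) a remainder coming from replacing the empirical at-risk average $\bar Z_j(t)$ by its population counterpart $e_j(t)$. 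Term (a) is handled by a Bernstein-type inequality for i.i.d.\ sums whose summands have sub-exponential ($\eta$-sub-Weibull) tails inherited from the tail assumption $\ssh(|Z_{1j}|>s)\le l_0\exp(-l_1 s^\eta)$; the truncation level optimizing the Bernstein bound is exactly what produces the exponent $\eta/(\eta+2)$ and the $(1-2\kappa)$ factor from the $n^{-\kappa}$ scale. Term (b) requires showing $\sup_{t\le\tau}|\bar Z_j(t)-e_j(t)|$ and $\inf_t n^{-1}\sum_i Y_i(t)$ concentrate — a Glivenko–Cantelli / empirical-process argument on the monotone at-risk process, again with an exponential rate — so that the remainder is of smaller order than $\gamma_n$ with overwhelming probability.

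Assembling the pieces: on the event where $\max_j |d_j-\delta_j| \le \frac12\gamma_n$, any $j \in \mc{M}^n$ has $|d_j| \ge |\delta_j| - \frac12\gamma_n \ge \zeta_n - \frac12\gamma_n \ge 2\gamma_n - \frac12\gamma_n > \gamma_n$, hence $j \in \widehat{\mc{M}}_d^n$; this yields $\ssh(\mc{M}^n \subseteq \widehat{\mc{M}}_d^n) \ge 1 - p_n\exp(-\text{const}\cdot n^{(1-2\kappa)\eta/(\eta+2)}) \to 1$ under the stated growth condition on $\log p_n$. I expect the main obstacle to be term (b) of the concentration step — controlling the nonlinear dependence on the at-risk process. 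The ratio $\bar Z_j(t) = \{\sum_i Z_{ij} Y_i(t)\}/\{\sum_i Y_i(t)\}$ is not a sum of i.i.d.\ terms, and one must handle the denominator uniformly over $t \in [0,\tau]$ near zero (where few subjects remain at risk); this is where the regularity conditions relegated to the appendix (presumably a lower bound on $\E Y_1(\tau)$ and boundedness of $e_j$) will be needed, and where the monotonicity of $Y_i$ lets one reduce the supremum over $t$ to a single point or a finite grid, keeping the exponential rate intact. The first (model-dependent) part is comparatively routine once the conditioning-on-the-index trick is set up, but care is needed to verify that Assumptions \ref{assumption:ran-cens}--\ref{assumption:po} really do let one pull $Y_1(t)$ through the conditional expectation as a function of the index alone.
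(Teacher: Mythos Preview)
Your proposal is correct and follows essentially the same route as the paper: the model-dependent inclusion $\mc{M}^n\subseteq\mc{M}_\delta^n$ is obtained exactly by your conditioning-on-the-index device (packaged in the paper as Lemmas~\ref{lem:fansong}--\ref{lemma:cum-haz-decom}, the latter doing the integration by parts that turns \eqref{eq:cp-decom} into an expectation of $\tilde e_j$), and the concentration step uses precisely your decomposition into an i.i.d.\ piece plus the $\bar Z_j-e_j$ remainder, handled respectively by truncation-at-level-$K$ plus Hoeffding (your Bernstein-for-sub-Weibull is an equivalent route to the same $\eta/(\eta+2)$ exponent) and by a VC/Massart empirical-process bound on the at-risk ratio, with the regularity condition $\ssh\{Y_1(\tau)=1\}>0$ supplying the denominator control you anticipate. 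One small slip: Assumption~\ref{assumption:po} is not invoked in Theorem~\ref{thm:mainthm} and you do not need it to pull the censoring through---Assumption~\ref{assumption:ran-cens} alone (censoring depends only on irrelevant coordinates) already makes $\ssh(C_1\ge t\mid \vek{Z}_1)$ factor out of $\tilde e_j(t)$ for $j\in\mc{M}^n$, which is all that Lemma~\ref{lemma:cum-haz-decom} requires.
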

Observe that with bounded features, we may take $\eta=\infty$ and
handle dimension of order $\log p_n=o(n^{1-2\kappa})$.  

We may dispense with Assumption 2 on the feature distribution by
revising~\eqref{eq:covariance-in-mainthm}. By
Lemma~\ref{lemma:cum-haz-decom} in the appendix, taking
$\tilde{e}_j(t):= \E\{Z_{1j} \ssh(T_1 \geq t|\vek{Z}_1)\}/\E\{\ssh(T_1 \geq
t|\vek{Z}_1)\}$, it holds generally under Assumption \ref{assumption:ran-cens}~that
\begin{displaymath}
  \delta_j=  \E\{\tilde{e}_j(T_1 \land C_1 \land \tau)\}, \quad j \in \mc{M}^n.
\end{displaymath}
Accordingly, if we replace \eqref{eq:covariance-in-mainthm} with the
assumption that $\E|Z_{1j} \ssh(T_1 \geq t|\vek{Z}_1)| \geq c_1n^{-\kappa}$
uniformly in $t$ for $j \in \mc{M}^n$, the conclusions of Theorem
\ref{thm:mainthm} still hold. In other words, we can generally expect FAST-SIS to
 detect features which are `correlated with the chance of
survival', much in line with 
Section~\ref{sec:classical}. While this is valuable structural
insight, the covariance assumption \eqref{eq:covariance-in-mainthm} seems
a more operational condition.

Assumption \ref{assumption:ran-cens} is crucial to the proof of
Theorem~\ref{thm:mainthm} and to the general idea of translating a
model-based feature selection problem into a problem of
hard-thresholding $\vekg{\delta}$. A weaker assumption is not possible
in general. For example, suppose that only Assumption~\ref{assumption:lr}
holds and that the censoring time also follows some single-index model of the form
\eqref{eq:model-for-haz} with regression coefficients
$\vekg\beta^0$. Applying Lemma~2.1~of \cite{cheng94:_adjus} to~\eqref{eq:cp-decom}, there exists finite constants $\zeta_1,\zeta_2$
(depending on $n$) such that
\begin{equation}
\label{eq:effect-of-cens}
  \vekg\delta = \vekg\Sigma(\zeta_1 \vekg\alpha^0+\zeta_2\vekg\beta^0).
\end{equation}
It follows that unrestricted censoring will generally confound the relationship
between $\vekg\delta$ and $\vekg\Sigma\vekg\alpha^0$,
hence~$\vekg\alpha^0$. The precise impact of such unrestricted censoring seems difficult to discern,
although \eqref{eq:effect-of-cens} suggests that FAST-SIS may still be
able to capture the underlying model (unless $\zeta_1
\vekg\alpha^0+\zeta_2\vekg\beta^0$ is particularly ill-behaved). We
will have more to say about unrestricted censoring in the next
section.

Theorem~\ref{thm:mainthm} shows that FAST-SIS can consistently
capture a superset of the relevant features. A priori, this superset
can be quite large; indeed, `perfect' screening would result by simply
including all features. For FAST-SIS to be useful, it must
substantially reduce feature space dimension.  Below we
state a survival analogue of Theorem~5 in \cite{fan10:_sure_np}, providing
an asymptotic rate on the FAST-SIS model size.
\begin{theorem}
\label{thm:bound-on-sel-var}
Suppose that Assumptions \ref{assumption:monotonicity}-\ref{assumption:po} hold alongside
the regularity conditions of the appendix and that $\ssh(|Z_{1j}|>s)
\leq l_0\exp(-l_1 s^\eta)$ for positive constants $l_0,l_1,\eta$ and sufficiently
large $s$.  If $\gamma_n=c_4n^{-2\kappa}$ for some $\kappa< 1/2$ and
$c_4>0$, there exists a positive constant $c_5$ such~that
  \begin{displaymath}
    \ssh[|\widehat{\mc{M}}^n_d| \leq O\{n^{2\kappa}\lambda_\mathrm{max}(\vekg{\Sigma})\}] \geq 1-O(p_n\exp\{-c_5n^{(1-2\kappa)\eta/(\eta+2)}\});
  \end{displaymath}
  where $\lambda_{\mathrm{max}}(\Sigma)$ denotes the maximal
  eigenvalue of the covariance matrix $\vekg{\Sigma}$ of the feature
  distribution.
\end{theorem}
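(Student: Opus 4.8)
The plan is to bound $|\widehat{\mc{M}}^n_d|$ by controlling how many components $d_j$ can simultaneously exceed the threshold $\gamma_n = c_4 n^{-2\kappa}$. The natural route is through the squared Euclidean norm: on the event that $|d_j| > \gamma_n$ for all $j$ in some index set, we have $|\widehat{\mc{M}}^n_d|\,\gamma_n^2 \leq \sum_j d_j^2 = \|\vek{d}\|^2$, so it suffices to show $\|\vek{d}\|^2 = O\{n^{-2\kappa}\lambda_{\mathrm{max}}(\vekg\Sigma)\}$ with the stated probability. By the triangle inequality $\|\vek{d}\|^2 \leq 2\|\vekg\delta\|^2 + 2\|\vek{d}-\vekg\delta\|^2$, and the two terms are handled separately.

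For the population term $\|\vekg\delta\|^2$, I would use Assumptions~\ref{assumption:lr}--\ref{assumption:po}: under partial orthogonality and partially random censoring, $\delta_j = 0$ for $j \notin \mc{M}^n$, and Assumption~\ref{assumption:lr} together with the single-index structure lets one write $\vekg\delta$ (restricted to $\mc{M}^n$) in the form $\vekg\Sigma$ times a low-dimensional object — concretely, $\vekg\delta = \vekg\Sigma \vek{c}\, q_n$ for a scalar $q_n$ arising from the time-integral in \eqref{eq:cp-decom}, as in the Cheng-Huang-Wei type identity \eqref{eq:effect-of-cens} but with $\zeta_2=0$. Then $\|\vekg\delta\|^2 \leq \lambda_{\mathrm{max}}(\vekg\Sigma)^2 \|\vek{c}\|^2 q_n^2$, and one bounds $\|\vek{c}\|^2$ via $\vek{c}^\top\vekg\Sigma\vek{c} = \var(\vek{Z}_1^\top\vekg\alpha^0)$-type relations (noting $\vek{c} = \vekg\Sigma\vekg\alpha^0 / (\vekg\alpha^{0\top}\vekg\Sigma\vekg\alpha^0)$ under Assumption~\ref{assumption:lr}); the regularity conditions in the appendix should give $q_n^2 = O(n^{-2\kappa})$ and the remaining factors $O(\lambda_{\mathrm{max}}(\vekg\Sigma))$ after dividing by $\gamma_n^2 \asymp n^{-4\kappa}$. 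This needs to be done carefully but is essentially bookkeeping using the same expansions that underlie Theorem~\ref{thm:mainthm}.

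For the stochastic term, the key is the uniform exponential deviation bound for $n^{1/2}|d_j - \delta_j|$ that the appendix establishes for the proof of Theorem~\ref{thm:mainthm}: under the sub-Weibull tail $\ssh(|Z_{1j}|>s) \leq l_0\exp(-l_1 s^\eta)$, one has $\ssh(|d_j - \delta_j| > t) \leq c\exp\{-c'\, (n t^2)^{\eta/(\eta+2)}\}$ type control (the rate $n^{(1-2\kappa)\eta/(\eta+2)}$ in the statement is exactly what comes out at $t \asymp n^{-2\kappa}$, up to constants). Taking a union bound over the $p_n$ features, with probability $1 - O(p_n\exp\{-c_5 n^{(1-2\kappa)\eta/(\eta+2)}\})$ we get $\max_j |d_j - \delta_j| = O(n^{-2\kappa})$ simultaneously, hence $\|\vek{d}-\vekg\delta\|^2 \leq p_n \max_j(d_j-\delta_j)^2$ — but this crude bound is too lossy. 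Instead I would only need that $|d_j| \leq |\delta_j| + O(n^{-2\kappa})$, and since $\gamma_n = c_4 n^{-2\kappa}$ with $c_4$ chosen appropriately relative to that $O(\cdot)$ constant, $|d_j| > \gamma_n$ forces $|\delta_j|$ bounded below by a constant multiple of $n^{-2\kappa}$, which in turn (by the $\|\vekg\delta\|^2$ bound above) caps the number of such $j$ at $O\{n^{2\kappa}\lambda_{\mathrm{max}}(\vekg\Sigma)\}$.

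The main obstacle I anticipate is the interplay between the threshold constant $c_4$ and the deviation constant: one must verify that $c_4$ can indeed be taken large enough (or that the appendix's deviation bound has a small enough leading constant) so that exceeding $\gamma_n$ genuinely implies $|\delta_j| \gtrsim n^{-2\kappa}$ on the high-probability event — otherwise the argument collapses. A secondary subtlety is getting the $\lambda_{\mathrm{max}}(\vekg\Sigma)$ dependence exactly right rather than $\lambda_{\mathrm{max}}(\vekg\Sigma)^2$; this requires pairing $\|\vekg\delta\|^2 \leq \lambda_{\mathrm{max}}(\vekg\Sigma)\cdot \vekg\delta^\top\vekg\Sigma^{-1}\vekg\delta$ with an $O(n^{-2\kappa})$ bound on the $\vekg\Sigma^{-1}$-quadratic form (which is natural since $\vekg\delta \propto \vekg\Sigma\vek{c}$ makes $\vekg\delta^\top\vekg\Sigma^{-1}\vekg\delta = \vek{c}^\top\vekg\Sigma\vek{c}\, q_n^2$), and tracking that the relevant sparsity $|\mc{M}^n|$ is absorbed into the regularity conditions. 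Modulo these constant-chasing issues, the proof is a fairly direct adaptation of the argument behind Theorem~5 of \cite{fan10:_sure_np}.
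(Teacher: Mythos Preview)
Your eventual strategy---work on the high-probability event $\{\max_j|d_j-\delta_j|\le \varepsilon\,n^{-\kappa}\}$, observe that $|d_j|>\gamma_n$ then forces $|\delta_j|$ to be large, and count such $j$ via $\|\vekg\delta\|^2$---is exactly the paper's route. The detour through $\|\vek d\|^2\le 2\|\vekg\delta\|^2+2\|\vek d-\vekg\delta\|^2$ that you correctly abandon is not needed.

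There is, however, a genuine gap in your handling of $\|\vekg\delta\|^2$. Your claim that ``the regularity conditions in the appendix should give $q_n^2=O(n^{-2\kappa})$'' is false: the scalar factor $q_n$ (the time-integrated quantity $\E\{W\,S_T(t,W)\}/\var(W)$ with $W=\vek Z_1^\top\vekg\alpha^0$) is of constant order and does not decay with $n$; nothing in Assumptions~1--4 or A--C makes it shrink. The paper instead shows directly that $\|\vekg\delta\|^2=O\{\lambda_{\max}(\vekg\Sigma)\}$ with no $n$-dependence, using Lemma~\ref{lem:fansong} to write $\E\{\vek Z_1 S_T(t,W)\}=\vekg\Sigma\vekg\alpha^0\,\E\{W S_T(t,W)\}/\var(W)$ and then the clean inequality
\[
\frac{\|\vekg\Sigma\vekg\alpha^0\|^2\,|\E\{W S_T(t,W)\}|^2}{\var(W)^2}
\;\le\;\frac{\lambda_{\max}(\vekg\Sigma)\,\vekg\alpha^{0\top}\vekg\Sigma\vekg\alpha^0}{\var(W)^2}\cdot\var(W)
\;=\;\lambda_{\max}(\vekg\Sigma),
\]
the last step using Cauchy--Schwarz and $S_T\le 1$. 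The $n^{2\kappa}$ in the final model-size bound then comes \emph{entirely} from dividing $\|\vekg\delta\|^2$ by the squared threshold, not from any decay of $q_n$. Part of your confusion is likely induced by a typo in the theorem statement: the paper writes $\gamma_n=c_4n^{-2\kappa}$, but the proof (and Lemma~\ref{prop:sure-pre-sreen}) in fact uses $\gamma_n\asymp n^{-\kappa}$, which is what makes $\|\vekg\delta\|^2/\gamma_n^2=O\{n^{2\kappa}\lambda_{\max}(\vekg\Sigma)\}$ and also what produces the exponent $n^{(1-2\kappa)\eta/(\eta+2)}$ in the probability bound. Your rate computation at $t\asymp n^{-2\kappa}$ would instead yield $n^{(1-4\kappa)\eta/(\eta+2)}$.

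Finally, your worry about the interplay between $c_4$ and the deviation constant is unnecessary: Lemma~\ref{prop:sure-pre-sreen} holds for \emph{any} $k_1>0$, so one simply takes $\varepsilon=c_4/2$ and the inclusion $\{|d_j|>2\varepsilon n^{-\kappa}\}\subseteq\{|\delta_j|>\varepsilon n^{-\kappa}\}$ on $B_n$ goes through for arbitrary $c_4>0$.
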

Informally, the theorem states that, under similar assumptions as in
Theorem~\ref{thm:mainthm} and the partial orthogonality condition
(Assumption \ref{assumption:po}), if features are not too strongly
correlated (as measured by the maximal eigenvalue of the covariance
matrix) and $p_n$ grows sufficiently fast, we can choose a threshold
$\gamma_n$ for hard-thresholding such that the false selection rate
becomes asymptotically negligible.

Our theorems say little about how to actually select the
hard-thresholding parameter $\gamma_n$ in practice. Following
\cite{fan08:_sure} and \cite{fan09:_ultrah_dimen_featur_selec}, we
would typically choose $\gamma_n$ such that $|\mc{M}_\mathrm{pre}^n|$
is of order $n/\log n$. Devising a general data-adaptive way of
choosing $\gamma_n$ is an open problem; false-selection-based criteria
are briefly mentioned in Section \ref{sec:scaling-fast}.

\subsection{The special case of the Aalen model}
Additional insight into the impact of censoring on FAST-SIS is
possible within the more restrictive context of the nonparametric
Aalen model with Gaussian features (\cite{aalen80};
\cite{aalen89}). This particular model asserts a hazard rate function
for $T_i$ of the form
\begin{equation}
\label{eq:def-of-aalenmodel}
  \lambda_j(t) = \lambda_0(t)+\vek{Z}_j^\top \vekg\alpha^0(t), \quad j=1,2,\ldots,n;
\end{equation}
for some baseline hazard rate function $\lambda_0$ and $\vekg{\alpha}^0$ a vector of continuous regression coefficient functions. The Aalen model extends the
Lin-Ying model of Section \ref{sec:classical} by allowing time-varying
regression coefficients. Alternatively, it can be viewed as defining
an expansion to the first order of a general hazard rate function in
the class~\eqref{eq:model-for-haz} in the sense that
\begin{equation}
\label{eq:aalen-expansion}
  \lambda\big(t,\vek{Z}_1^\top \vekg\alpha^0\big) \approx \lambda(t,0)+\vek{Z}_1^\top \vekg\alpha^0\frac{\partial \lambda(t,x)}{\partial x}\Big|_{x=0}.
\end{equation}
For Aalen models with Gaussian features, we have the following analogue to Theorem~\ref{thm:mainthm}.
\begin{theorem}
\label{thm:mainthm-aalen}
Suppose that Assumptions
\ref{assumption:monotonicity}-\ref{assumption:lr} hold alongside the
regularity conditions of the appendix. Suppose moreover that the
$\vek{Z}_1$ is mean zero Gaussian and that $T_1$ follows a model of
the form~\eqref{eq:def-of-aalenmodel} with regression coefficients
$\vekg{\alpha}^0$. Assume that $C_1$ also follows a model of the form
\eqref{eq:def-of-aalenmodel} conditionally on $\vek{Z}_1$ and that
censoring is independent. Let
$\vek{A}^0(t):=\int_0^t \vekg{\alpha}^0(s) \d s$. If for some
$\kappa<1/2$ and $c_1>0$, we have
    \begin{equation}
\label{eq:aalen-eq}
      |\cov[Z_{1j},\vek{Z}_1^\top \E\{\vek{A}^0(T_1 \land C_1 \land \tau)\}]| \geq c_1n^{-\kappa}, \quad j \in \mc{M}^n,
    \end{equation}
    then the conclusions of Theorem~\ref{thm:mainthm} hold with $\eta=2$.
\end{theorem}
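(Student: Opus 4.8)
The plan is to reduce Theorem~\ref{thm:mainthm-aalen} to Theorem~\ref{thm:mainthm} by re-running the latter's two-step argument in the Aalen setting. The first step there is a model-free, uniform-in-$j$ exponential concentration bound for $n^{1/2}|d_j-\delta_j|$ which, combined with the threshold choice $\gamma_n=c_2n^{-\kappa}$ with $0<c_2\leq c_1/2$ and with $\log p_n=o\{n^{(1-2\kappa)\eta/(\eta+2)}\}$, yields $\ssh(\mc{M}_\delta^n\subseteq\widehat{\mc{M}}_d^n)\to1$. That step uses only the feature tail condition and the regularity conditions of the appendix, not Assumptions~\ref{assumption:ran-cens}--\ref{assumption:po}; since a mean-zero Gaussian vector obeys $\ssh(|Z_{1j}|>s)\leq 2\exp(-s^2/2)$, it applies verbatim with $\eta=2$. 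The second step is the identification of $\vekg\delta$ needed to guarantee $\min_{j\in\mc{M}^n}|\delta_j|\geq c_1n^{-\kappa}$, hence $\mc{M}^n\subseteq\mc{M}_\delta^n$. So the real work is to recompute $\vekg\delta$ under the Aalen model~\eqref{eq:def-of-aalenmodel} with Gaussian features and independent, Aalen-type censoring, and to check that~\eqref{eq:aalen-eq} is exactly the lower bound this produces.

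For the identification I would start from $\vekg\delta=\E\int_0^\tau\{\vek{Z}_1-\vek{e}(t)\}Y_1(t)\lambda_1(t)\,\d t$, the compensator form of the population FAST statistic, the martingale integral having zero mean. Writing $\lambda_1(t)=\lambda_0(t)+\vek{Z}_1^\top\vekg\alpha^0(t)$, the baseline part vanishes since $\E[\{\vek{Z}_1-\vek{e}(t)\}Y_1(t)]=\vek{0}$ by definition of $\vek{e}$. Conditioning on $\vek{Z}_1$ and using conditional independence of $T_1$ and $C_1$ together with the Aalen forms of their hazards gives $\E\{Y_1(t)\mid\vek{Z}_1\}=G_0(t)\exp\{-\vek{Z}_1^\top(\vek{A}^0(t)+\vek{B}^0(t))\}$, where $G_0$ is deterministic and $\vek{B}^0(t):=\int_0^t\vekg\beta^0(s)\,\d s$ integrates the censoring regression coefficient functions $\vekg\beta^0$. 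With $\vek{u}(t):=-(\vek{A}^0(t)+\vek{B}^0(t))$, the integrand's expectation is then a difference of the Gaussian moments $\E\{\vek{Z}_1\vek{Z}_1^\top\vekg\alpha^0(t)\exp(\vek{Z}_1^\top\vek{u}(t))\}$ and $\vek{e}(t)\E\{\vek{Z}_1^\top\vekg\alpha^0(t)\exp(\vek{Z}_1^\top\vek{u}(t))\}$, both evaluated by Stein's identity (equivalently Lemma~2.1 of \cite{cheng94:_adjus}, as used for~\eqref{eq:effect-of-cens}); the same identity applied to $\E\{\vek{Z}_1\exp(\vek{Z}_1^\top\vek{u}(t))\}$ shows $\vek{e}(t)=\vekg\Sigma\vek{u}(t)$. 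The key simplification is that the rank-one correction $\vekg\Sigma\vek{u}(t)\{\vek{u}(t)^\top\vekg\Sigma\vekg\alpha^0(t)\}$ produced by the first moment is cancelled exactly by the $\vek{e}(t)$ term, leaving the integrand equal to $\E\{Y_1(t)\}\vekg\Sigma\vekg\alpha^0(t)$; integrating over $[0,\tau]$ and recognising $\int_0^\tau Y_1(t)\vekg\alpha^0(t)\,\d t=\vek{A}^0(T_1\land C_1\land\tau)$ gives the clean identity $\vekg\delta=\vekg\Sigma\,\E\{\vek{A}^0(T_1\land C_1\land\tau)\}$.

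Since $\vek{Z}_1$ is centered with covariance $\vekg\Sigma$, the $j$-th coordinate of this identity is $\delta_j=\cov[Z_{1j},\vek{Z}_1^\top\E\{\vek{A}^0(T_1\land C_1\land\tau)\}]$, so~\eqref{eq:aalen-eq} is precisely $|\delta_j|\geq c_1n^{-\kappa}$ for $j\in\mc{M}^n$; this gives $\mc{M}^n\subseteq\mc{M}_\delta^n$ and the lower bound that feeds the first step, now with $\eta=2$, so $\ssh(\mc{M}^n\subseteq\widehat{\mc{M}}_d^n)\to1$ whenever $\log p_n=o\{n^{(1-2\kappa)/2}\}$. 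I expect the main obstacle to be technical bookkeeping rather than conceptual: justifying Stein's identity with unbounded features (the Gaussian-times-polynomial-times-exponential integrals converge because the exponent in $\vek{u}(t)$ is linear, but this needs checking together with the Fubini interchange and the Lebesgue-null boundary term in $Y_1$), and confirming that the appendix's concentration bound for $d_j-\delta_j$ is genuinely insensitive to relaxing the censoring hypotheses, which is what permits importing it under the more permissive censoring model assumed here.
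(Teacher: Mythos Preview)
Your proposal is correct and follows the same two-step architecture as the paper: identify $\vekg\delta$ explicitly under the double Aalen model with Gaussian features, then invoke the model-free concentration bound (Lemma~\ref{prop:sure-pre-sreen}) with $\eta=2$. You also arrive at the same key identity $\vekg\delta=\vekg\Sigma\,\E\{\vek{A}^0(T_1\wedge C_1\wedge\tau)\}$, and your observation that the concentration step uses only the tail condition and the appendix regularity (not Assumption~\ref{assumption:ran-cens}) is exactly what the paper relies on.

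The one methodological difference is in how the identification is carried out. The paper routes through Lemma~\ref{prop:aalen-screen}, which computes $\vekg\delta$ for any feature vector of the form $\vekg\Sigma^{1/2}\tilde{\vek{Z}}_1$ with i.i.d.\ components, using the moment generating function $\phi$ of a single component; the Gaussian case then drops out because $(\log\phi)''\equiv 1$ collapses the diagonal matrix in~\eqref{eq:aalen-general-sigma} to the identity, yielding~\eqref{eq:aalen-normal-sigma}, after which integration by parts gives the final form. You instead apply Stein's identity directly to the Gaussian integrals and observe the rank-one cancellation between the second-moment term and the $\vek{e}(t)$ correction. Your route is more direct for the theorem as stated; the paper's route buys the more general formula~\eqref{eq:aalen-general-sigma}, which is what drives the discussion around~\eqref{eq:single-index-w-censoring} about non-Gaussian features. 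For the purpose of proving Theorem~\ref{thm:mainthm-aalen} itself, the two computations are equivalent.
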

In view of \eqref{eq:aalen-expansion}, Theorem~\ref{thm:mainthm-aalen}
can be viewed as establishing, within the model class
\eqref{eq:model-for-haz}, conditions for first-order validity of
FAST-SIS under a general (independent) censoring mechanism and Gaussian
features. The expectation term in \eqref{eq:aalen-eq} is essentially
the `expected regression coefficients at the exit time' which is
strongly dependent on censoring through the symmetric dependence on
survival and censoring time.

In fact, general independent censoring is a nuisance even in the
Lin-Ying model which would otherwise seem the `natural model' in which
to use FAST-SIS. Specifically, assuming only independent censoring,
suppose that $T_1$ follows a Lin-Ying model with regression
coefficients $\vekg\alpha^0$ conditionally on $\vek{Z}_1$ and that
$C_1$ also follows some Lin-Ying model conditionally on $\vek{Z}_1$. If
$\vek{Z}_1 = \vekg{\Sigma}^{1/2}\tilde{\vek{Z}}_1$ where the
components of $\tilde{\vek{Z}}_1$ are i.i.d. with mean zero and unit
variance, there exists a $p_n \times p_n$ diagonal matrix $\vek{C}$
such that
\begin{equation}
\label{eq:single-index-w-censoring}
  \vekg{\delta} = \vekg\Sigma^{1/2} \vek{C} \vekg{\Sigma}^{1/2} \vekg\alpha^0.
\end{equation}
See Lemma \ref{prop:aalen-screen} in the appendix. It holds that
$\vek{C}$ has constant diagonal iff features are Gaussian; otherwise
the diagonal is nonconstant and depends nontrivially on the regression
coefficients of the censoring model. A curious implication is
that, under Gaussian features, FAST screening has the
SIS property for this `double' Lin-Ying model irrespective of the
(independent) censoring mechanism. Conversely, sufficient conditions
for a SIS property to hold here under more general feature
distributions would require the $j$th component of
$\vekg\Sigma^{1/2} \vek{C} \vekg{\Sigma}^{1/2} \vekg\alpha^0$ to be
`large' whenever $\alpha_j^0$ is `large'; hardly a very operational
assumption. In other words, even in the simple Lin-Ying
model, unrestricted censoring complicates analysis of
FAST-SIS considerably.

\subsection{Scaling the FAST statistic}
\label{sec:scaling-fast}
The FAST statistic is easily generalized to incorporate
scaling. Inspection of the results in the appendix immediately shows
that multiplying the FAST statistic by some strictly positive,
deterministic weight does not alter its asymptotic behavior. Under
suitable assumptions, this also holds when  weights are
stochastic. In the notation of Section \ref{sec:classical}, the
following two types of scaling are immediately relevant:
\begin{alignat}{2}
\label{eq:tfast}    d_{j}^{Z} &= d_{j}B_{jj}^{-1/2} &&\textrm{ ($Z$-FAST);}\\ 
\label{eq:lyfast}    d_{j}^{\mathrm{LY}} &= d_{j}D_{jj}^{-1} && \textrm{ (Lin-Ying-FAST).}
\end{alignat}
The $Z$-FAST statistic corresponds to standardizing $\vek{d}$ by its
estimated standard deviation; screening with this statistic is
equivalent to the standard approach of ranking features according to
univariate Wald $p$-values. Various forms of asymptotic false-positive
control can be implemented for $Z$-FAST, courtesy of the central limit
theorem. Note that $Z$-FAST is model-independent in the sense that its
interpretation (and asymptotic normality) does not depend on a
specific model. In contrast, the Lin-Ying-FAST statistic is
model-specific and corresponds to calculating the univariate
regression coefficients in the Lin-Ying model, thus leading to an
analogue of the idea of `ranking by absolute regression coefficients' of
\cite{fan10:_sure_np} .

We may even devise a scaling of $\vek{d}$ which mimicks the `ranking
by marginal likelihood ratio' screening of \cite{fan10:_sure_np} by
considering univariate versions of the natural loss function
$\vekg{\beta} \mapsto \vekg{\beta}^\top \vek{D} \vekg{\beta} -2
\vekg{\beta}^\top \vek{d}$ for the Lin-Ying model. The components of
the resulting statistic are rather similar to \eqref{eq:lyfast}, taking the form
\begin{equation}
  \label{eq:lossfast}    d_{j}^{\mathrm{loss}} = d_{j}B_{jj}^{-1/2} \textrm{ (loss-FAST).}
\end{equation}

Additional flexibility can be gained by using a time-dependent scaling
where some strictly positive (stochastic) weight is multiplied
on the integrand in (\ref{eq:dev-at-event-defin}).  This is beyond the
scope of the present paper.

\section{Beyond simple independent screening -- iterated FAST screening}
\label{sec:iteratedsis}
The main assumption underlying any SIS method, including FAST-SIS, is
that the design is close to orthogonal. This assumption is easily
violated: a relevant feature may have a low marginal association with
survival; an irrelevant feature may be indirectly associated with
survival through associations with relevant features etc.  To address
such issues, \cite{fan08:_sure} and
\cite{fan09:_ultrah_dimen_featur_selec} proposed various heuristic
iterative SIS (ISIS) methods which generally work as follows. First,
SIS is used to recruit a small subset of features within which an even
smaller subset of features is selected using a (multivariate) variable
selection method such as penalized regression. Second, the (univariate) relevance
of each feature not selected in the variable selection step is
re-evaluated, adjusted for all the selected features. Third, a small
subset of the most relevant of these new features is joined to the set
of already selected features, and the variable selection step is
repeated. The last two steps are iterated until the set of selected
features stabilizes or some stopping criterion of choice is reached.

We advocate a similar strategy to extend the application domain of
FAST-SIS. In view of Section~\ref{sec:mod-based-inter}, a variable
step using a working Lin-Ying model is intuitively sensible. We may
also provide some formal justification. Firstly, estimation in a
Lin-Ying model corresponds to optimizing the loss function
\begin{equation}
\label{eq:lossf-for-ahaz}
 L(\vekg\beta):=\vekg\beta^\top \vek{D} \vekg\beta -2 \vekg\beta^\top \vek{d};
\end{equation}
where $\vek{D}$ was defined in Section \ref{sec:mod-based-inter}. As
discussed by \cite{martinussen09:_covar_selec_semip}, the loss
function \eqref{eq:lossf-for-ahaz} is meaningful for general hazard
rate models: it is the empirical version of the mean squared
prediction error for predicting, with a working Lin-Ying model, the
part of the intensity which is orthogonal to the at-risk indicator. In
the present context, we are mainly interested in the model selection
properties of a working Lin-Ying model. Suppose that $T_1$
conditionally on $\vek{Z}_1$ follows a single-index model of the
form~\eqref{eq:model-for-haz} and that Assumptions~\ref{assumption:ran-cens}-\ref{assumption:po}
hold.  Suppose that $\vekg\Delta \vekg\beta^0=\vekg\delta$ with
$\vekg\Delta$ the in probability limit of $\vekg{D}$. Then $\alpha_j^0
\equiv 0$ implies $\beta_j^0=0$ \citep{hattori06:_some} so that a
working Lin-Ying model will yield conservative model selection in a quite general setting. Under
stronger assumptions, the following result, related to work by
\cite{brillinger83:_gauss} and \cite{li89:_regres}, is available.
\begin{theorem}
\label{thm:consistent-joint-single-index}
Assume that $T_1$ conditionally on $\vek{Z}_1$ follows a single-index model of the
form \eqref{eq:model-for-haz}. Suppose moreover that
Assumption~\ref{assumption:lr} holds and that $C_1$ is independent of
$T_1,\vek{Z}_1$ (random censoring). If $\vekg\beta^0$ defined by
$\vekg\Delta \vekg\beta^0 =\vekg\delta$ is the vector of regression
coefficients of the associated working Lin-Ying model and
$\vekg\Delta$ is nonsingular, then there exists a nonzero constant
$\nu$ depending only on the distributions of $\vek{Z}_1^\top
\vekg\alpha^0$ and $C_1$ such that $\vekg\beta^0 = \nu
\vekg\alpha^0$.
\end{theorem}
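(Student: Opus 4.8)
The plan is to show that $\vekg\delta$ and $\vekg\Delta\vekg\alpha^0$ are both proportional to the vector $\vek{c}$ furnished by Assumption~\ref{assumption:lr}. Since $\vekg\Delta$ is nonsingular, writing $\vekg\delta=\nu_1\vek{c}$ and $\vekg\Delta\vekg\alpha^0=\nu_2\vek{c}$ with $\nu_2\neq 0$ gives $\vekg\beta^0=\vekg\Delta^{-1}\vekg\delta=(\nu_1/\nu_2)\vekg\Delta^{-1}(\vekg\Delta\vekg\alpha^0)=(\nu_1/\nu_2)\vekg\alpha^0$, so $\nu:=\nu_1/\nu_2$ is the claimed constant and everything reduces to (i) the two proportionalities and (ii) $\nu\neq 0$. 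Throughout, write $U:=\vek{Z}_1^\top\vekg\alpha^0$; the degenerate case $U\equiv 0$ (equivalently $\vekg\alpha^0=\vek 0$, since $\vek{Z}_1$ is standardized) is trivial, so assume $U$ non-degenerate. Taking the inner product with $\vekg\alpha^0$ on both sides of $\E(\vek{Z}_1\mid U)=\vek{c}\,U$ then forces $\vek{c}^\top\vekg\alpha^0=1$; in particular $\vek{c}\neq\vek 0$.

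Next I would reduce the population quantities \eqref{eq:cp-decom} and the in-probability limit $\vekg\Delta$ of $\vek{D}$ to functionals of $(T_1,\vek{Z}_1)$ only. Set $G(t):=\ssh(C_1\geq t)$ and $S(t\mid U):=\exp\{-\int_0^t\lambda(s,U)\,\d s\}=\ssh(T_1\geq t\mid\vek{Z}_1)$, which depends on $\vek{Z}_1$ only through $U$. Because $Y_1(t)=\1(T_1\geq t)\1(C_1\geq t)$ and $C_1\perp(T_1,\vek{Z}_1)$, every factor $Y_1(t)$ contributes the scalar $G(t)$ and conditioning the surviving indicator on $\vek{Z}_1$ replaces $\1(T_1\geq t)$ by $S(t\mid U)$; in particular $\vek{e}(t)$ does not involve $G$. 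Applying the linearity condition in the form $\E\{\vek{Z}_1\,g(U)\}=\vek{c}\,\E\{U\,g(U)\}$ for scalar $g$ (valid for elliptically contoured, hence the assumed, feature law) gives $\vek{e}(t)=\mu(t)\vek{c}$ with $\mu(t):=\E\{U S(t\mid U)\}/\E\{S(t\mid U)\}$, and, expanding $\{\vek{Z}_1-\vek{e}(t)\}$ and using $\vek{Z}_1\vek{Z}_1^\top\vekg\alpha^0=\vek{Z}_1U$ together with $\vek{c}^\top\vekg\alpha^0=1$ for the second identity, one obtains
\begin{equation*}
\vekg\delta=\vek{c}\int_0^\tau G(t)\bigl\{\E[U\lambda(t,U)S(t\mid U)]-\mu(t)\,\E[\lambda(t,U)S(t\mid U)]\bigr\}\,\d t=:\nu_1\vek{c},
\end{equation*}
\begin{equation*}
\vekg\Delta\vekg\alpha^0=\vek{c}\int_0^\tau G(t)\bigl\{\E[U^2S(t\mid U)]-\mu(t)^2\,\E[S(t\mid U)]\bigr\}\,\d t=:\nu_2\vek{c}.
\end{equation*}
Fubini and differentiation under the integral sign here are covered by the regularity conditions of the appendix, and $\nu_1,\nu_2$ evidently depend only on the laws of $U$ and of $C_1$ (for the fixed link $\lambda$), as required.

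It then remains to show $\nu=\nu_1/\nu_2\neq 0$, which is the crux of the argument. Writing $\E_t$ for expectation under the tilted law with density proportional to $S(t\mid\cdot)$, the integrand of $\nu_2$ equals $\E\{S(t\mid U)\}\,\mathrm{Var}_t(U)\geq 0$, strictly positive because $U$ is non-degenerate and $S(t\mid\cdot)>0$; hence $\nu_2>0$ (consistently, $\nu_2\vek{c}=\vekg\Delta\vekg\alpha^0\neq\vek 0$). For $\nu_1$, the integrand equals $\E\{S(t\mid U)\}\,\cov_t\{U,\lambda(t,U)\}=-\E\{S(t\mid U)\}\,\mu'(t)$, so that, since $\mu(0)=\E(U)=0$,
\begin{equation*}
\nu_1=-\bigl[G(t)\E\{S(t\mid U)\}\mu(t)\bigr]_0^\tau+\int_0^\tau\mu(t)\,\d\bigl(G(t)\E\{S(t\mid U)\}\bigr).
\end{equation*}
By Assumption~\ref{assumption:monotonicity} the conditional survival function $S(t\mid\cdot)$ is strictly monotone in $U$ for each $t>0$, so $U$ and $S(t\mid U)$ are strictly counter- or co-monotone and $\mu(t)$ has a strict, constant sign on $(0,\tau]$; moreover $t\mapsto G(t)\E\{S(t\mid U)\}=G(t)\ssh(T_1\geq t)$ is non-increasing from the value $1$ at $t=0$. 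Hence the boundary term and the integral term above have the same (strict) sign and cannot cancel, giving $\nu_1\neq 0$ and therefore $\nu\neq 0$. This non-vanishing step --- ruling out that time-averaging against the censoring weight $G$ annihilates the marginal signal --- is the only genuine obstacle; it is the survival analogue of the Brillinger/Li--Duan phenomenon and is precisely where monotonicity of the hazard link (beyond the linearity condition on the features) is needed.
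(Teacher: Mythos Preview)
Your argument is correct and follows the same route as the paper: both $\vekg\delta$ and $\vekg\Delta\vekg\alpha^0$ are shown to be scalar multiples of one and the same vector (you work with $\vek{c}$, the paper with $\vekg\Sigma\vekg\alpha^0$; by Lemma~\ref{lem:fansong} these coincide up to the factor $\var(U)^{-1}$), whence $\vekg\beta^0=\nu\vekg\alpha^0$ with $\nu=\nu_1/\nu_2$.

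Where you go beyond the paper is in the verification that $\nu\neq 0$. The published proof only checks that the denominator is nonzero (arguing from nonsingularity of $\vekg\Delta$) and is silent on the numerator; your integration-by-parts argument, using Assumption~\ref{assumption:monotonicity} to force $\mu(t)=\cov\{U,S(t\mid U)\}/\E\{S(t\mid U)\}$ to have a strict constant sign on $(0,\tau]$ so that the boundary term $-A(\tau)\mu(\tau)$ and the Stieltjes integral against the non-increasing $A(t)=\E\{Y_1(t)\}$ cannot cancel, actually establishes $\nu_1\neq 0$. This is a small but genuine strengthening relative to the paper's proof, and it is precisely here that the monotonicity hypothesis is doing work.
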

Thus a working Lin-Ying model can consistently estimate regression
coefficient signs under misspecification. From the efforts of
\cite{zhu09:_variab} and \cite{zhu09:_noncon} for other types of
single-index models, it seems conceivable that variable selection
methods designed for the Lin-Ying model will enjoy certain consistency
properties within the model class \eqref{eq:model-for-haz}.  The
conclusion of Theorem~\ref{thm:consistent-joint-single-index}
continues to hold when $\vekg{\Delta}$ is replaced by any matrix
proportional to the feature covariance matrix $\vekg{\Sigma}$. This is
a consequence of Assumption~\ref{assumption:lr} and underlines the
considerable flexibility available when estimating in single-index
models.


Variable selection based on the Lin-Ying loss
\eqref{eq:lossf-for-ahaz} can be accomplished by optimizing a
penalized loss function of the form 
\begin{equation}
\label{eq:pen-loss}
  \vekg\beta \mapsto L(\vekg\beta)+\sum_{j=1}^{p}p_\lambda(|\beta_j|);
\end{equation}
where $p_\lambda\colon \mathbb{R} \to \mathbb{R}$ is some nonnegative
penalty function, singular at the origin to facilitate model selection
\citep{fan01:_variab} and depending on some tuning parameter $\lambda$
controlling the sparsity of the penalized estimator.  A popular
choice is the lasso penalty
\citep{tibshirani09:_univar_shrin_cox_model_high_dimen_data} and its
adaptive variant \citep{zou06}, corresponding to penalty functions
$p_\lambda(|\beta_j|)=\lambda |\beta_j|$ and
$p_\lambda(|\beta_j|)=\lambda |\beta_j|/|\hat{\beta}_j|$ with
$\hat{\vekg{\beta}}$ some root $n$ consistent estimator of
$\vekg{\beta}^0$, respectively. These penalties were studied by
\cite{ma07:_path} and \cite{martinussen09:_covar_selec_semip} for the
Lin-Ying model. Empirically, we have had better success with the
one-step SCAD (OS-SCAD) penalty of \cite{zou08:_one} than with 
 lasso penalties.  Letting
\begin{equation}
\label{eq:scad-pen}
w_\lambda(x) := \lambda\1(x\leq \lambda)+\frac{(a\lambda -x)_+}{a-1}\1(x>\lambda), \quad a>2
\end{equation}
an OS-SCAD penalty function for the Lin-Ying model can be defined as follows:
\begin{equation}
\label{eq:oss-scad-penalty}
  p_\lambda(|\beta_j|):= w_\lambda(\bar{D}|\hat{\beta}_j|) |\beta_j|.
\end{equation}
Here $\hat{\vekg\beta}:=\argmin_{\vekg\beta} L(\vekg\beta)$ is the
unpenalized estimator and $\bar{D}:=n^{-1}\mathrm{tr}\,(\vek{D})$ is
the average diagonal element of~$\vek{D}$; this particular re-scaling
is just one way to lessen dependency of the penalization on the time scale.  If
$\vek{D}$ has approximately constant diagonal (which is often the case
for standardized features), then re-scaling by $\bar{D}$ leads to a
similar penalty as for OS-SCAD in the linear regression model with
standardized features. The choice $a=3.7$ in \eqref{eq:scad-pen} was
recommended by \cite{fan01:_variab}.  OS-SCAD has not previously been
explored for the Lin-Ying model but its favorable
performance in ISIS for other regression models is well known
\citep{fan09:_ultrah_dimen_featur_selec,fan10:_borrow_stren}. OS-SCAD
can be implemented efficiently using, for example, coordinate descent
methods for fitting the lasso
\citep{gorst-rasmussen11:_effic,friedman07:_pathw}. For fixed~$p$, the
OS-SCAD penalty \eqref{eq:oss-scad-penalty} has the oracle property if
the Lin-Ying model holds true. A proof is beyond scope but follows by
adapting \cite{zou08:_one} along the lines of
\cite{martinussen09:_covar_selec_semip}.

In the basic FAST-ISIS algorithm proposed below, the initial recruitment step
corresponds to ranking the regression coefficients in the univariate
Lin-Ying models. This is a convenient generic choice because it enables
interpretation of the algorithm as standard `vanilla ISIS'
\citep{fan09:_ultrah_dimen_featur_selec} for the Lin-Ying model.
\begin{algorithm}[Lin-Ying-FAST-ISIS]
\label{alg:isis}
Set $\mc{M}:=\{1,\ldots,p\}$, let $r_\mathrm{max}$ be some pre-defined maximal number of iterations of the algorithm.
  \begin{enumerate}
  \item (\textit{Initial recruitment}). Perform SIS by ranking
    $|d_{j}D_{jj}^{-1}|$, $j=1,\ldots,p_n$,
    according to decreasing order of magnitude and retaining the $k_0 \leq d$ most
    relevant features~$\mc{A}_1 \subseteq \mc{M}$.
\item For $r=1,2,\ldots$ do:
  \begin{enumerate}
  \item (\textit{Feature selection}). Define $\omega_j:=\infty$ if $j \in \mc{A}_r$ and $\omega_j:=0$ otherwise. Estimate 
    \begin{displaymath}
      \hat{\vekg{\beta}}:=\argmin_{ \vekg\beta}\Big\{L(\vekg{\beta})+\sum_{j=1}^{p_n} \omega_jp_{\hat\lambda}(|\beta_j|) \Big\};
    \end{displaymath}
    with $p_\lambda$ defined in \eqref{eq:oss-scad-penalty} for some suitable tuning parameter $\hat\lambda$. Set
    $\mc{B}_r := \{j\,:\,\hat{\beta}_j\neq 0\}$.
\item If $r>1$ and $\mc{B}_r = \mc{B}_{r-1}$, or if
  $r=r_{\mathrm{max}}$; return $\mc{B}_r$. 
\item (\textit{Re-recruitment}). Otherwise, re-evaluate  relevance of
  features in $\mc{M}\backslash \mc{B}_r$ according to the
  absolute value of their regression coefficient $|\tilde{\beta}_j|$ in the
  $|\mc{M}\backslash \mc{B}_r|$ unpenalized Lin-Ying models including each feature in $\mc{M}\backslash
  \mc{B}_r$ and all features in $\mc{B}_r$, i.e.
  \begin{equation}
\label{eq:fulladj}
    \tilde{\beta}_j :=\hat{\beta}_1^{(j)}, \quad \textrm{where } \hat{\vekg\beta}^{(j)}=\argmin_{\vekg\beta_{\{j\} \cup \mc{B}_r}} L(\vekg\beta_{\{j\} \cup \mc{B}_r}), \quad j \in \mc{M}\backslash \mc{B}_r.
  \end{equation}
  Take $\mc{A}_{r+1}:=\mc{C}_r \cup \mc{B}_r$ where
  $\mc{C}_r$ is the set of the $k_r$ most relevant features in
  $\mc{M}\backslash \mc{A}_r$, ranked according to decreasing order of
  magnitude of $|\tilde{\beta}_j|$.
  \end{enumerate}
\end{enumerate}
\end{algorithm}
\cite{fan08:_sure} recommended choosing $d$ to be of order $n/\log
n$. Following \cite{fan09:_ultrah_dimen_featur_selec}, we may take
$k_0=\lfloor 2d/3 \rfloor$ and $k_l = d-|\mc{A}_l|$ at each step. This
$k_0$ ensures that we complete at least one iteration of the
algorithm; the choice of $k_r$ for $r>0$ ensures that at most $d$
features are included in the final solution.

Algorithm \ref{alg:isis} defines an iterated variant of SIS with the
Lin-Ying-FAST statistic (\ref{eq:lyfast}). We can devise an analogous
iterated variant of $Z$-FAST-SIS in which the initial recruitment is
performed by ranking based on the statistic (\ref{eq:tfast}), and the
subsequent re-recruitments are performed by ranking $|Z|$-statistics
in the multivariate Lin-Ying model according to decreasing order of
magnitude, using the variance
estimator~\eqref{eq:lin-ying-rootn-consistency}. A third option would
be to base recruitment on (\ref{eq:lossfast}) and re-recruitments on the decrease in the multivariate loss 
\eqref{eq:lossf-for-ahaz} when joining a given feature to the set of
features picked out in the variable selection step.

The re-recruitment step (b.iii) in Algorithm \ref{alg:isis} resembles
that of \cite{fan09:_ultrah_dimen_featur_selec}. Its naive
implementation will be computationally burdensome when $p_n$ is large,
requiring a low-dimensional matrix inversion per feature. Significant
speedup over the naive implementation is possible via the matrix
identity
\begin{equation}
\label{eq:matrixid}
 \vek{D}=\left(\begin{matrix}
    e & \vek{f}^\top \\
    \vek{f} & \tilde{\vek{D}}
  \end{matrix}\right) \Rightarrow \vek{D}^{-1} = \left(\begin{matrix}
    k^{-1} &  -k^{-1}\vek{f}^\top \tilde{\vek{D}}^{-1} \\
    -k^{-1} \tilde{\vek{D}}^{-1}\vek{f}& (\tilde{\vek{D}}-e^{-1}\vek{f}\vek{f}^\top)^{-1}
  \end{matrix}\right) \quad  \textrm{where } k=e-\vek{f}^\top \tilde{\vek{D}}^{-1} \vek{f}.
\end{equation}
Note that only the first row
of $\vek{D}^{-1}$ is required for the re-recruitment step so that
\eqref{eq:fulladj} can be implemented using just a single low-dimensional
matrix inversion alongside $O(p_n)$ matrix/vector multiplications. Combining
\eqref{eq:matrixid} with \eqref{eq:lin-ying-rootn-consistency}, a
similarly efficient implementation applies for $Z$-FAST-ISIS.

The variable selection step (b.i) of Algorithm \ref{alg:isis} requires
the choice of an appropriate tuning parameter. This is traditionally a
difficult part of penalized regression, particularly  when the aim is
model selection where methods such as cross-validation are prone to
overfitting \citep{leng07}. Previous work on ISIS used the Bayesian
information criterion (BIC) for tuning parameter selection
\citep{fan09:_ultrah_dimen_featur_selec}. Although BIC is based on the
likelihood, we may still define the following `pseudo BIC' based on
the loss  \eqref{eq:lossf-for-ahaz}:
\begin{equation}
\label{eq:def-of-pbic}
  \mathrm{PBIC}(\lambda) = \kappa\{L(\hat{\vekg\beta}_\lambda)-L(\hat{\vekg\beta})\}+ n^{-1} \mathrm{df}_\lambda \log n.
\end{equation}
Here $\hat{\vekg\beta}_\lambda$ is the penalized estimator,
$\hat{\vekg\beta}$ is the unpenalized estimator, $\kappa>0$ is a
scaling constant of choice, and $\mathrm{df}_\lambda$ estimates the degrees of freedom of the penalized estimator. A computationally
convenient choice is $\mathrm{df}_\lambda =
\|\hat{\vekg\beta}_\lambda\|_0$ \citep{zou07l}. It turns out that
choosing $\hat{\lambda}= \argmin_\lambda \mathrm{PBIC}_\lambda$ may lead to
model selection consistency. Specifically, the loss 
\eqref{eq:lossf-for-ahaz} for the Lin-Ying model is of the
least-squares type. Then we can repeat the arguments  of
 \cite{wang07:_unified_lasso_estim_least_squar_approx} and
show that, under suitable consistency assumptions  for the penalized estimator, there exists a sequence $\lambda_n \to 0$ yielding selection
consistency for $\hat{\vekg\beta}_{\lambda_n}$ and
satisfying
\begin{equation}
\label{eq:cons-property}
  \ssh\Big\{\inf_{\lambda \in S}
\mathrm{PBIC}(\lambda)>\mathrm{PBIC}(\lambda_n) \Big\} \to 1, \qquad n \to \infty;
\end{equation}
with $S$ the union of the set of tuning parameters $\lambda$ which
lead to overfitted (strict supermodels of the true model),
respectively underfitted models (any model which do not include the
true model). While \eqref{eq:cons-property} holds independently of
the scaling constant $\kappa$, the finite-sample behavior of PBIC
depends strongly on~$\kappa$. A sensible value may be inferred heuristically as
follows: the range of a `true' likelihood BIC is asymptotically
equivalent to a Wald statistic in the sense that (for fixed 
$p$),
\begin{equation}
\label{eq:bicexpansion}
  \mathrm{BIC}(0)-\mathrm{BIC}(\infty) =\hat{\vekg\beta}_{\mathrm{ML}}^\top
\mc{I}(\vekg\beta_0)\hat{\vekg\beta}_{\mathrm{ML}}+o_p(n^{-1/2});
\end{equation}
with $\hat{\vekg\beta}_{\mathrm{ML}}$ the maximum likelihood estimator
and $\mc{I}(\vekg\beta_0) \approx
n^{-1}\var(\hat{\vekg\beta}_\mathrm{ML}-\vekg\beta_0)^{-1}$ the
information matrix.  We may specify $\kappa$ by requiring that
$\mathrm{PBIC}(0)-\mathrm{PBIC}(\infty)$ admits an analogous
interpretation as a Wald statistic. Since $
\mathrm{PBIC}(0)-\mathrm{PBIC}(\infty) = \kappa\vek{d}^\top
\vek{D}^{-1} \vek{d} + o_p(n^{-1/2})$, it follows from
(\ref{eq:lin-ying-rootn-consistency}) that we should choose
\begin{displaymath}
\kappa := \frac{\vek{d}^\top \vek{B}^{-1}\vek{d}}{\vek{d}^\top \vek{D}^{-1} \vek{d}}.
\end{displaymath}
This choice of $\kappa$  also removes the dependency of PBIC on the time scale. 

\section{Simulation studies}
\label{sec:numerical-results}
In this section, we investigate the performance of FAST screening on
simulated data. Rather than comparing with popular variable selection methods
such as the lasso, we will compare with analogous
screening methods based on the Cox model
\citep{fan10:_borrow_stren}. This seems a more pertinent benchmark
since previous work has already demonstrated that (iterated)
SIS can outperform variable
selection based on penalized regression in a number of cases
(\cite{fan08:_sure}; \cite{fan09:_ultrah_dimen_featur_selec}).

For all the simulations, survival times were generated from three
different conditionally exponential models of the generic form
(\ref{eq:model-for-haz}); that is, a time-independent hazard `link function' applied to
a linear functional of features. For suitable
constants $c$, the link functions were as follows (see also
Figure \ref{fig:linkfun}):
\begin{displaymath}
  \begin{array}{rrcl}
      \mathrm{Logit}:& \lambda_\mathrm{logit}(t,x)&:=&\{1+\exp(c_\mathrm{logit}x\}^{-1} \\
    \mathrm{Cox}:& \lambda_\mathrm{cox}(t,x)&:=&\exp(c_\mathrm{cox}x) \\
    \mathrm{Log}:& \lambda_\mathrm{log}(t,x)&:=& \log\{\mathrm{e}+(c_\mathrm{log}x)^2\}\{1+\exp(c_\mathrm{log}x)\}^{-1}.
  \end{array}
\end{displaymath}
The link functions represent different characteristic effects on the
feature functional, ranging from uniformly bounded (logit) over fast
decay/increase (Cox), to fast decay/slow increase (log). We took
$c_\mathrm{logit}=1.39$, $c_\mathrm{cox}=0.68$, and
$c_\mathrm{\log}=1.39$ and, unless otherwise stated, survival times
were right-censored by independent exponential random variables with
rate parameters 0.12 (logit link), 0.3 (Cox link) and 0.17 (log
link). These constants were selected to provide a crude `calibration'
to make the simulation models more comparable: for a univariate
standard Gaussian feature $Z_1$, a regression coefficient $\beta=1$,
and a sample size of $n=300$, the expected $|Z|$-statistic was 8 for
all three link functions with an expected censoring rate of~25\%, as
evaluated by numerical integration based on the true likelihood.

Methods for FAST screening have been implemented  in the  R package
`ahaz' \citep{gorst-rasmussen11}.

\begin{figure}[h!]
\center
   \includegraphics[scale=.75]{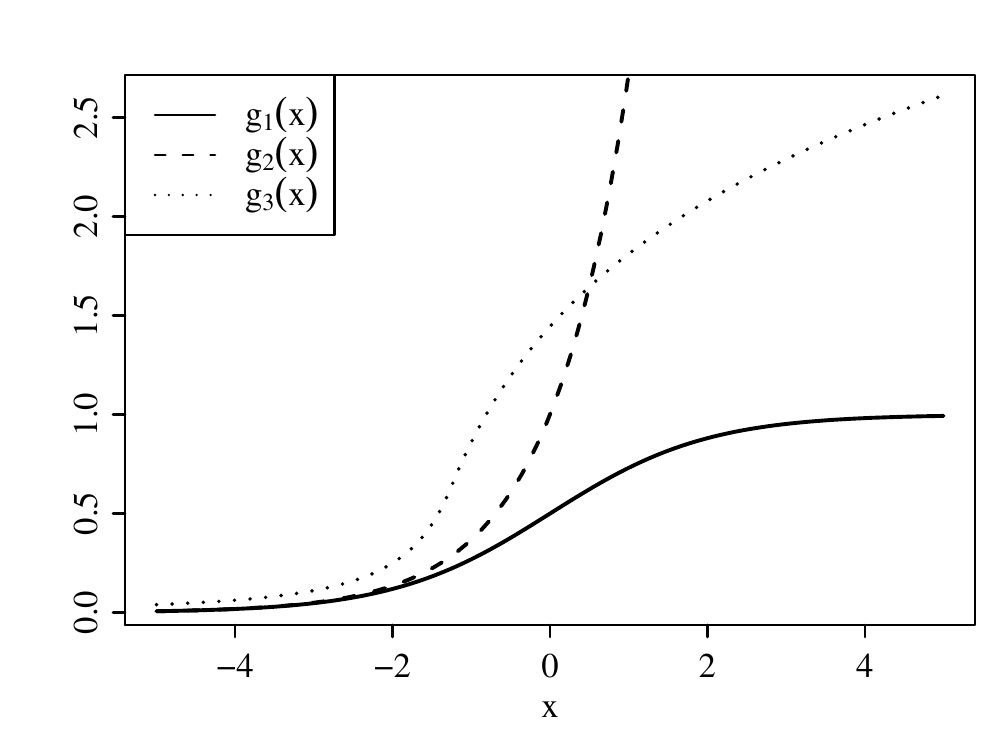}
\caption{\label{fig:linkfun} The three hazard rate link functions used in the simulation studies}
\end{figure}

\subsection{Performance of FAST-SIS}
\label{sec:is-weighting-useful}
We first considered the performance of basic, non-iterated FAST-SIS.  Features were generated as in scenario~1 of \cite{fan10:_sure_np}. Specifically, let
$\epsilon$ be standard Gaussian. Define 
\begin{equation}
\label{eq:sis-scenario}
  Z_{1j} := \frac{\epsilon_j+a_j\epsilon}{\sqrt{1+a_j^2}}, \quad j=1,\ldots,p;
\end{equation}
where $\epsilon_j$ is independently distributed as a standard
Gaussian for $j=1,2,\ldots,\lfloor p/3\rfloor$: independently
distributed according to a double exponential distribution with
location parameter zero and scale parameter 1 for $j=\lfloor p/3
\rfloor +1,\ldots,\lfloor 2p/3 \rfloor$; and independently distributed
according to a Gaussian mixture $0.5N(-1,1)+0.5N(1,0.5)$ for
$j=\lfloor 2p/3\rfloor+1,\ldots,p$. The constants $a_j$ satisfy
$a_1=\cdots=a_{15}$ and $a_j=0$ for $j>15$. With the choice
$a_1=\sqrt{\rho/(1-\rho)}$, $0 \leq \rho \leq 1$, we obtain
$\cor(Z_{1i},Z_{1j})=\rho$ for $i \neq j$, $i,j \leq 15$, enabling
crude adjustment of the correlation structure of the feature
distribution. Regression coefficients were chosen to be of the generic
form $ \vekg\alpha^0=(1,1.3,1,1.3,\ldots)^\top$ with exactly the first
$s$ components nonzero.

\begin{table}
\caption{MMMS and RSD (in parentheses) for basic SIS  with $n=300$ and $p=20,000$ (100  simulations).\label{tab:tab1}}
\centering
\fbox{%
{\footnotesize
\begin{tabular}{llrrrrrrrrrrr}
\\[-7pt]
&&\multicolumn{3}{c}{$\lambda_\mathrm{logit}$  }&& \multicolumn{3}{c}{$\lambda_\mathrm{cox}$ } && \multicolumn{3}{c}{$\lambda_\mathrm{log}$ } \\[5pt]
  \cline{3-5} \cline{7-9} \cline{11-13}\\[-5pt]
$\rho$ &  & $s=3$ & $s=6$ & $s=9$ &&  $s=3$ & $s=6$ & $s=9$ &&  $s=3$ & $s=6$ & $s=9$   \\[5pt]
\hline
\\[-5pt]
0   & $\vek{d}$& 3 (1) & 32 (53) & 530 (914) & & 3 (0) & 7 (5) & 45 (103) & & 3 (0) & 22 (44) & 202 (302)  \\ 
   & $\vek{d}^{\mathrm{LY}}$& 4 (1) & 66 (95) & 678 (939) & & 3 (0) & 11 (14) & 96 (176) & & 3 (1) & 41 (87) & 389 (466)  \\ 
   & $\vek{d}^{Z}$& 3 (1) & 40 (71) & 522 (873) & & 3 (0) & 7 (7) & 48 (105) & & 3 (0) & 22 (45) & 262 (318)  \\ 
   & \textbf{Cox}& 3 (1) & 44 (68) & 572 (928) & & 3 (0) & 7 (4) & 40 (117) & & 3 (0) & 26 (51) & 280 (306)  \\[4pt] 
0.25   & $\vek{d}$& 3 (0) & 6 (1) & 11 (1) & & 3 (0) & 6 (0) & 9 (1) & & 3 (0) & 6 (1) & 10 (1)  \\ 
   & $\vek{d}^{\mathrm{LY}}$& 3 (0) & 7 (1) & 11 (2) & & 3 (0) & 6 (1) & 10 (1) & & 3 (0) & 7 (1) & 11 (1)  \\ 
   & $\vek{d}^{Z}$& 3 (0) & 6 (1) & 11 (1) & & 3 (0) & 6 (0) & 10 (1) & & 3 (0) & 6 (1) & 10 (1)  \\ 
   & \textbf{Cox}& 3 (0) & 6 (1) & 11 (1) & & 3 (0) & 6 (0) & 9 (1) & & 3 (0) & 6 (1) & 10 (1)  \\[4pt] 
0.5   & $\vek{d}$& 3 (0) & 7 (2) & 12 (2) & & 3 (0) & 6 (1) & 10 (1) & & 3 (0) & 7 (1) & 11 (2)  \\ 
   & $\vek{d}^{\mathrm{LY}}$& 3 (0) & 9 (3) & 13 (1) & & 3 (0) & 8 (2) & 13 (2) & & 3 (0) & 8 (2) & 12 (2)  \\ 
   & $\vek{d}^{Z}$& 3 (0) & 8 (3) & 12 (1) & & 3 (0) & 7 (2) & 12 (2) & & 3 (0) & 7 (2) & 12 (2)  \\ 
   & \textbf{Cox}& 3 (1) & 9 (3) & 13 (2) & & 3 (0) & 6 (1) & 11 (2) & & 3 (0) & 8 (2) & 12 (2)  \\[4pt] 
0.75   & $\vek{d}$& 3 (1) & 9 (2) & 13 (1) & & 3 (0) & 8 (2) & 12 (1) & & 3 (1) & 9 (3) & 12 (2)  \\ 
   & $\vek{d}^{\mathrm{LY}}$& 4 (2) & 11 (3) & 14 (2) & & 4 (1) & 11 (3) & 14 (1) & & 4 (2) & 10 (2) & 13 (1)  \\ 
   & $\vek{d}^{Z}$& 4 (1) & 10 (2) & 13 (1) & & 3 (1) & 10 (3) & 13 (1) & & 3 (1) & 9 (2) & 13 (1)  \\ 
   & \textbf{Cox}& 5 (3) & 12 (2) & 14 (1) & & 3 (0) & 7 (2) & 12 (2) & & 4 (1) & 11 (3) & 14 (2)  \\[5pt]
  \end{tabular} 
}
}
\end{table}
\begin{figure}[h!]
\center
   \includegraphics[scale=.7]{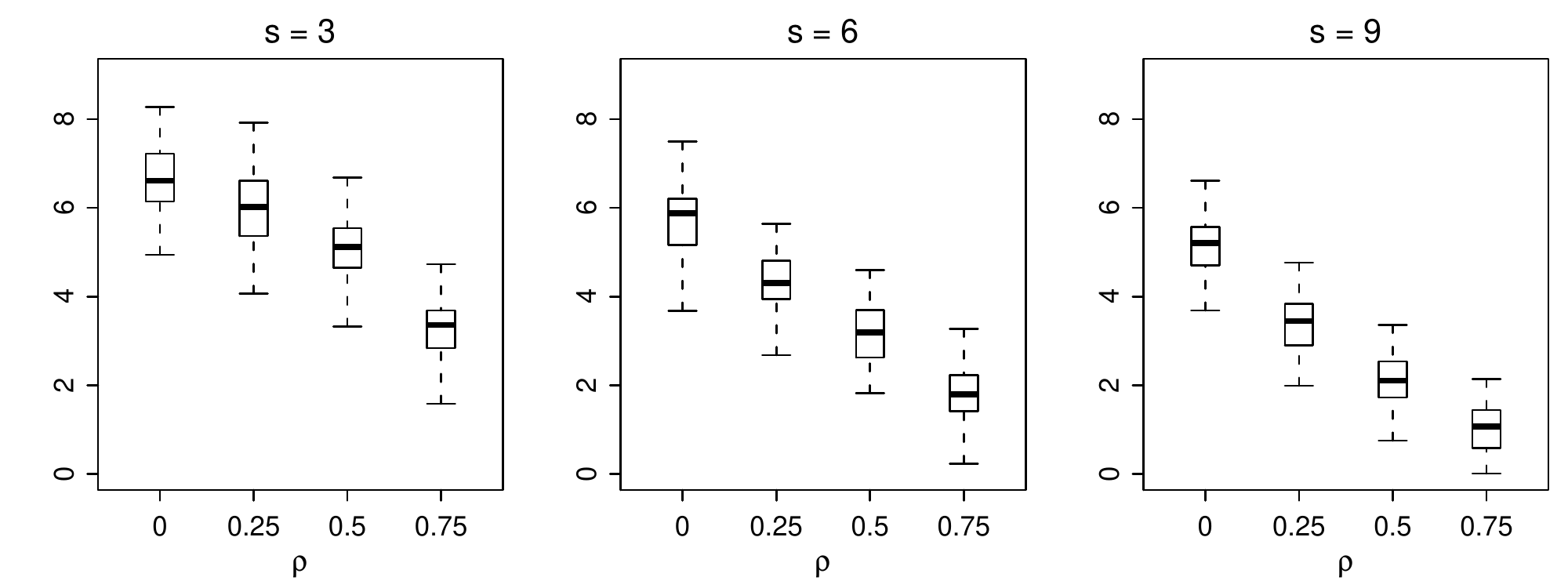}
   \caption{\label{fig:box1} Minimum observed 
     $|Z|$-statistics in  the oracle model under
     $\lambda_\mathrm{log}$, for the SIS simulation study.}
\end{figure}

For each combination of hazard link function, non-sparsity level $s$, and
correlation $\rho$, we performed 100 simulations with $p=20,000$
features and $n=300$ observations. Features were ranked using the
vanilla FAST statistic, the scaled FAST statistics (\ref{eq:tfast})
and (\ref{eq:lyfast}), and SIS based on a Cox working model
(Cox-SIS), the latter ranking features according their absolute univariate regression
coefficient.  Results are shown in Table~\ref{tab:tab1}. As a
performance measure, we report the median of the minimum model size
(MMS) needed to detect all relevant features alongside its relative
standard deviation (RSD), the interquartile range divided by 1.34. The
MMS is a useful performance measure for this type of study since it
eliminates the need to select a threshold parameter for SIS. The
censoring rate in the simulations was typically 30\%-40\%.

For all methods, the MMMS is seen to increase with feature
correlation $\rho$ and non-sparsity~$s$. As
also noted by \cite{fan10:_sure_np} for the case of SIS for
generalized linear models, some correlation among features can
actually be helpful since it increases the strength of marginal
signals. Overall, the statistic $\vek{d}^\mathrm{\mathrm{LY}}$ seems
to perform slightly worse than both $\vek{d}$ and
$\vek{d}^\mathrm{Z}$ whereas the latter two statistics perform
similarly to Cox-SIS. In our basic implementation, screening with any
of the FAST statistics was more than 100 times faster than Cox-SIS,
providing a rough indication of the relative computational efficiency
of FAST-SIS.

To gauge the relative difficulty of the different simulation
scenarios, Figure \ref{fig:box1} shows box plots of the minimum of the
observed $|Z|$-statistics in the oracle model (the joint model with
only the relevant features included and estimation based on the
likelihood under the true link function) for the link function
$\lambda_\mathrm{log}$. This particular link function represents an
`intermediate' level of difficulty; with $|Z|$-statistics for
$\lambda_\mathrm{cox}$ generally being somewhat larger and
$|Z|$-statistics for $\lambda_\mathrm{logit}$ being slightly
smaller. Even with oracle information and the correct working model,
these are evidently difficult data to deal with.

\subsection{FAST-SIS with non-Gaussian features and
  nonrandom censoring}
\label{sec:non-random-censoring}
We next investigated FAST-SIS with non-Gaussian features and a more
complex censoring mechanism.  The simulation scenario
was inspired by the previous section but with all features generated
according to either a standard Gaussian distribution, a
$t$-distribution with~4 degrees of freedom, or a unit rate exponential
distribution. Features were standardized to have
mean zero and variance one, and the feature correlation structure was such
that $\mathrm{Cor}(Z_{1i},Z_{1j})=0.125$ for $i,j < 15$, $i \neq j$
and $\mathrm{Cor}(Z_{1i},Z_{1j})=0$ otherwise. Survival times were
generated according to the link function $\lambda_\mathrm{log}$ with
regression coefficients $\vekg{\beta}=(1,1.3,1,1.3,1,1.3,0,0,\ldots)$
while censoring times were generated according to the same model (link
function $\lambda_\mathrm{log}$ and conditionally on the same feature
realizations) with regression coefficients
$\tilde{\vekg{\beta}}=k\vekg{\beta}$. The constant $k$
controls the association between censoring and survival
times, leading to a basic example of nonrandom censoring
(competing risks).

Using $p=20,000$ features and~$n=300$ observations, we performed 100
simulations under each of the three feature distributions, for
different values of $k$. Table \ref{tab:tab-cens} reports the MMMS and
RSD for the four different screening methods of the previous section,
as well as for the statistic $\vek{d}^{\mathrm{loss}}$ in
(\ref{eq:lossfast}). The censoring rate in all scenarios was around
50\%.

From the column with $k=0$ (random censoring), the heavier tails of
the $t$-distribution increases the MMMS, particularly for
$\vek{d}^{\mathrm{LY}}$.  The vanilla FAST statistic $\vek{d}$ seems
the least affected here, most likely because it does not directly
involve second-order statistics which are poorly estimated due to the
heavier tails. While $\vek{d}^{Z}$ and $\vek{d}^{\mathrm{loss}}$ are
also scaled by second-order statistics, the impact of the tails is
dampened by the square-root transformation in the scaling factors. In
contrast, the more distinctly non-Gaussian exponential distribution is
problematic for $\vek{d}^{Z}$. Overall, the statistics~$\vek{d}$ and
$\vek{d}^{\mathrm{loss}}$ seems to have the best and most consistent
performance across feature distributions. Nonrandom censoring
generally increases the MMMS and RSD, particularly for the
non-Gaussian distributions. There appears to be no clear difference
between the effect of positive and negative values of $k$. We found
that the effect of $k \neq 0$ diminished when the sample size was
increased (results not shown), suggesting that nonrandom censoring in
the present example leads to a power rather than bias issue. This may
not be surprising in view of the considerations below
\eqref{eq:single-index-w-censoring}. However, the example still shows
the dramatic impact of nonrandom censoring on the performance of~SIS.

\begin{table}
  \caption{\label{tab:tab-cens}MMMS and RSD (in parentheses) for SIS under nongaussian features/nonrandom censoring with $n=300$ and $p=20,000$ (100 simulations).}
\centering
\fbox{%
{\footnotesize
  \begin{tabular}{llcccrrrr}
\\[-7pt]
&&&&&\multicolumn{4}{c}{$k$} \\[5pt]
\cline{6-9}\\[-5pt]
\textsl{Feature distr.} & && $k=0$ && $-0.5$ & $-0.25$ & $0.25$ & $0.5$\\[5pt]
\hline
\\[-5pt]

\textsl{Gaussian}   & $\vek{d}$&& 6 (1) && 8 (8) & 7 (4) & 6 (1) & 7 (3) \\ 
& $\vek{d}^{\mathrm{LY}}$&& 6 (1) && 8 (6) & 7 (3) & 7 (2) & 8 (5) \\ 
& $\vek{d}^{Z}$&& 6 (1) && 7 (6) & 7 (2) & 6 (1) & 7 (2) \\ 
& $\vek{d}^{\mathrm{loss}}$&& 6 (1) && 8 (6) & 7 (3) & 6 (1) & 7 (3) \\ 
   & \textbf{Cox} && 6 (1) && 8 (5) & 7 (2) & 6 (1) & 7 (2) \\[4pt]  
$t$ ($df=4$)    & $\vek{d}$&& 6 (1) && 13 (17) & 7 (5) & 6 (1) & 7 (3) \\ 
   & $\vek{d}^{\mathrm{LY}}$&& 11 (7) && 12 (8) & 9 (7) & 48 (136) & 99 (185) \\ 
   & $\vek{d}^{Z}$&& 7 (3) && 17 (20) & 8 (5) & 7 (2) & 7 (3) \\ 
   & $\vek{d}^{\mathrm{loss}}$&& 6 (1) && 8 (7) & 7 (4) & 8 (15) & 10 (10) \\ 
   & \textbf{Cox} && 7 (4) && 15 (23) & 8 (10) & 8 (4) & 9 (5) \\[4pt]  
\textsl{Exponential}   & $\vek{d}$&& 6 (1) && 6 (2) & 6 (1) & 7 (4) & 8 (7) \\ 
   & $\vek{d}^{\mathrm{LY}}$&& 6 (1) && 11 (12) & 7 (3) & 6 (1) & 6 (1) \\ 
   & $\vek{d}^{Z}$&& 15 (10) && 34 (36) & 24 (17) & 22 (28) & 26 (29) \\ 
   & $\vek{d}^{\mathrm{loss}}$&& 6 (0) && 7 (4) & 6 (1) & 6 (1) & 6 (1) \\ 
   & \textbf{Cox} && 8 (4) && 22 (31) & 14 (11) & 9 (6) & 9 (8) \\[5pt]  
  \end{tabular}
 }}
\end{table}

\subsection{Performance of FAST-ISIS}
We lastly evaluated the ability of FAST-ISIS (Algorithm
\ref{alg:isis}) to cope with scenarios where FAST-SIS fails. As in
the previous sections, we compare our results with the analogous ISIS
screening method for the Cox model. To perform Cox-ISIS, we used the R
package `SIS', with (re)recruitment based on the absolute Cox
regression coefficients and variable selection based on OS-SCAD. We
also compared with $Z$-FAST-ISIS variant described below Algorithm
\ref{alg:isis} in which (re)recruitment is based on the Lin-Ying model
$|Z|$-statistics (results for FAST-ISIS with (re)recruitment based on the
loss function were very similar).

For the simulations, we adopted the structural form of the feature
distributions used by \cite{fan10:_borrow_stren}. We
considered $n=300$ observations and $p=500$ features which were
jointly Gaussian and marginally standard Gaussian. Only regression
coefficients and feature correlations differed between cases as
follows:
\begin{enumerate}
\item The regression coefficients are $\beta_1=-0.96$,
  $\beta_2=0.90$, $\beta_3=1.20$, $\beta_4=0.96$, $\beta_5=-0.85$,
  $\beta_6=1.08$ and $\beta_j=0$ for $j>6$.  Features are
  independent, $\cor(Z_{1i},Z_{1j})=0$ for $i \neq j$.
\item The  regression coefficients are the same as in (a) while $\mathrm{Corr}(Z_{1i},Z_{1j})=0.5$ for $i \neq j$.
\item The regression coefficients are
  $\beta_1=\beta_2=\beta_3=4/3$, $\beta_4=-2\sqrt{2}$. The correlation
  between features is given by $\cor(Z_{1,4},Z_{1j})=1/\sqrt{2}$ for $j \neq
  4$ and $ \cor(Z_{1i},Z_{1j})=0.5$ for $i \neq j$, $i,j\neq 4$.
\item The regression coefficients are
  $\beta_1=\beta_2=\beta_3=4/3$, $\beta_4=-2\sqrt{2}$ and
  $\beta_5=2/3$. The correlation between features is 
  $\cor(Z_{1,4},Z_{1j})=1/\sqrt{2}$ for $j \notin \{4,5\}$, $\cor(Z_{1,5},Z_{1j})=0$
  for $j \neq 5$, and $ \cor(Z_{1i},Z_{1j})=0.5$ for $i \neq j$, $i,j \notin \{4,5\}$.
\end{enumerate}
Case (a) serves as a basic benchmark whereas case (b) is harder
because of the correlation between relevant and irrelevant
features. Case (c) introduces a strongly relevant feature $Z_4$ which
is not marginally associated with survival; lastly, case (d) is
similar to case (c) but also includes a feature $Z_5$ which is weakly
associated with survival and does not `borrow' strength from its
correlation with other relevant features.

Following \cite{fan10:_borrow_stren}, we took $d=\lfloor n/\log n/3\rfloor
= 17$ for the initial dimension reduction; performance did not depend
much on the detailed choice of $d$ of order $n/\log n$. For the three different screening
methods, ISIS was run for maximum of 5 iterations. (P)BIC was used for
tuning the variable selection steps. Results are shown in Table
\ref{tab:table3}, summarized over 100 simulations. We report the
average number of truly relevant features selected by ISIS and the average
final model size, alongside standard deviations in parentheses. To
provide an idea of the improvement over basic SIS, we also report the
median of the minimum model size (MMMS) for the initial SIS step
(based on vanilla FAST-SIS only). The censoring rate in the different
scenarios was 25\%-35\%.

The overall performance of the three ISIS methods is comparable
between the different cases. All methods deliver a dramatic
improvement over non-iterated SIS, but no one method performs
significantly better than the others. The two FAST-ISIS methods have a
surprisingly similar performance. As one would expect, Cox-ISIS does
particularly well under the link function $\lambda_\mathrm{cox}$ but
does not appear to be uniformly better than the two FAST-ISIS methods even in
this ideal setting.  Under the link function $\lambda_\mathrm{logit}$,
both FAST-ISIS methods outperform Cox-ISIS in terms of the number of
true positives identified, as do they for the link function
$\lambda_\mathrm{log}$, although less convincingly. On the other hand,
the two FAST-ISIS methods generally select slightly larger models
than Cox-ISIS and their false-positive rates (not shown) are
correspondingly slightly larger. FAST-ISIS was 40-50
times faster than Cox-ISIS, typically completing calculations in 0.5-1
seconds in our specific implementation. Figure \ref{fig:box2} shows
box plots of the minimum of the observed $|Z|$-statistics in the
oracle model (based on the likelihood undebr the true model).

\begin{table}
\caption{\label{tab:table3}Simulation results for ISIS with $n=300$, $p=500$ and $d=17$ (100 simulations). Numbers in parentheses are standard deviations (or relative standard deviation, for the MMMS). }
\centering
\fbox{%
{\footnotesize
\begin{tabular}{rrrrrrrrrrrr}
\\[-7pt]
 &&&& \multicolumn{3}{c}{\textsl{Average no.~true positives (ISIS)}} && \multicolumn{3}{c}{\textsl{Average model size (ISIS)}} \\[5pt]
 \cline{5-7} \cline{9-11}\\[-5pt]
 \textsl{Link} & \textsl{Case} & \textsl{MMMS (RSD)} &&   LY-FAST & $Z$-FAST & Cox &&   LY-FAST & $Z$-FAST & Cox   \\[5pt]
 \hline
\\[-5pt]
$\lambda_\mathrm{logit}$   & (a)& 7 (3) & & 6.0 (0) & 6.0 (0) & 5.5 (1) & & 7.8 (1) & 7.9 (2) & 6.3 (2) \\ 
   & (b)& 500 (1) & & 5.5 (1) & 5.5 (1) & 3.4 (1) & & 7.0 (2) & 6.7 (2) & 4.3 (2) \\ 
   & (c)& 240 (125) & & 3.7 (1) & 3.8 (1) & 3.0 (2) & & 5.2 (2) & 5.7 (3) & 4.5 (4) \\ 
   & (d)& 230 (124) & & 4.8 (1) & 4.7 (1) & 3.5 (2) & & 5.9 (2) & 6.2 (3) & 4.9 (4) \\[4pt] 
$\lambda_\mathrm{cox}$   & (a)& 7 (1) & & 6.0 (0) & 6.0 (0) & 6.0 (0) & & 7.5 (1) & 7.5 (1) & 6.2 (1) \\ 
   & (b)& 500 (1) & & 5.8 (1) & 5.8 (1) & 5.6 (1) & & 7.2 (2) & 6.8 (1) & 6.4 (2) \\ 
   & (c)& 218 (120) & & 3.7 (1) & 3.6 (1) & 3.0 (2) & & 5.1 (3) & 5.3 (3) & 4.9 (4) \\ 
   & (d)& 258 (129) & & 4.9 (1) & 4.8 (1) & 3.8 (2) & & 6.3 (2) & 6.0 (2) & 6.4 (5) \\[4pt]  
$\lambda_\mathrm{log}$   & (a)& 6 (1) & & 6.0 (0) & 6.0 (0) & 6.0 (0) & & 7.3 (1) & 7.4 (1) & 6.3 (1) \\ 
   & (b)& 500 (1) & & 5.8 (1) & 5.7 (1) & 4.9 (1) & & 7.2 (2) & 6.7 (1) & 5.7 (2) \\ 
   & (c)& 252 (150) & & 3.9 (0) & 3.9 (1) & 3.4 (1) & & 5.3 (2) & 4.9 (2) & 5.5 (5) \\ 
   & (d)& 223 (132) & & 4.9 (1) & 4.8 (1) & 4.0 (2) & & 6.0 (2) & 6.1 (2) & 5.9 (5) \\[5pt] 
  \end{tabular} 
}}
\end{table}

\begin{figure}[h!]
\center
   \includegraphics[scale=.7]{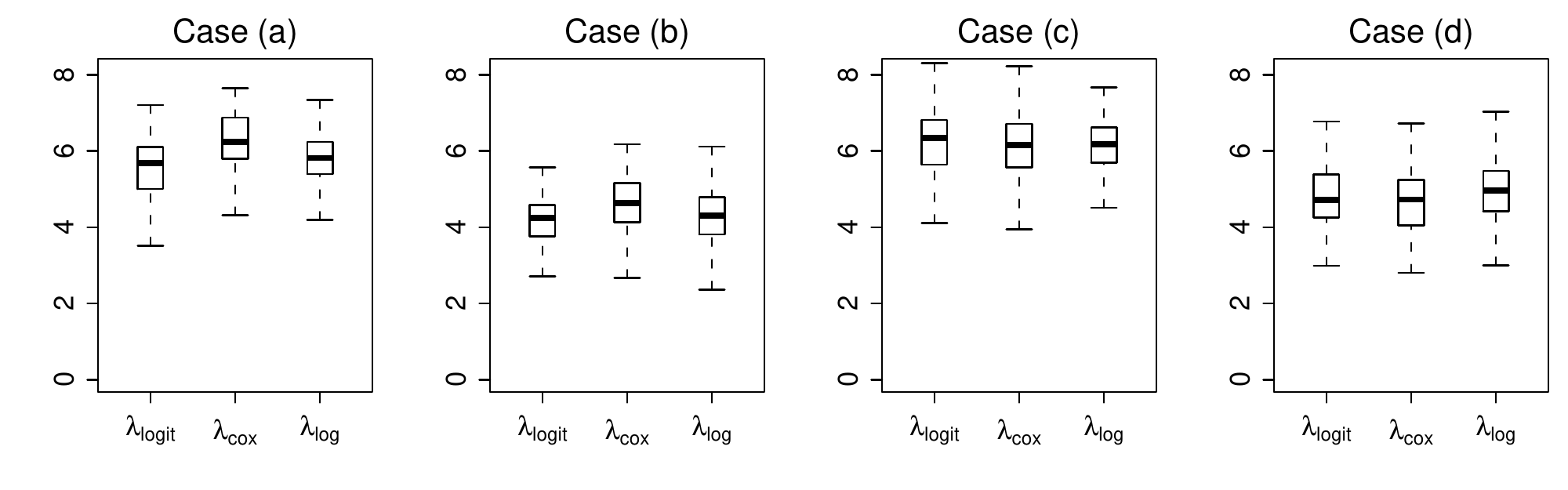}
   \caption{\label{fig:box2} Minimum observed $|Z|$-statistics
     in the oracle models for the FAST-ISIS simulation study.}
\end{figure}
\pagebreak
We have experimented with other link functions and feature
distributions than those described above (results not shown). Generally, we
found that Cox-ISIS performs worse than FAST-ISIS for bounded link
functions. The observation from Table \ref{tab:table3}, that FAST-ISIS
may improve  upon Cox-ISIS even under the link function
$\lambda_\mathrm{cox}$, does not necessarily hold when the signal
strength is increased. Then Cox-ISIS will be
superior, as expected. Changing the feature distribution to
one for which the linear regression property (Assumption
\ref{assumption:lr}) does not hold leads to a decrease in the overall
performance for all three ISIS methods.


\section{Application to AML data}

The study by \cite{metzeler08} concerns the development and evaluation
of a prognostic gene expression marker for overall survival among
patients diagnosed with cytogenetically normal acute myeloid leukemia
(CN-AML). A total of 44,754 gene expressions were recorded among 163
adult patients using Affymetrix HG-U133 A1B microarrays. Based the
method of supervised principal components
\citep{bair04:_semi_super_method_predic_patien}, the gene
expressions were used to develop an 86-gene signature for predicting
survival. The signature was validated on an external test data set
consisting of 79 patients profiled using Affymetrix HG-U133 Plus 2.0
microarrays. All data  is publicly
available on the Gene Expression Omnibus web site
(http://www.ncbi.nlm.nih.gov/geo/) under the accession number
GSE12417. The CN-AML data was recently used by
\cite{benner10:_high_dimen_cox_model} for comparing the
performance of variable selection~methods.

Median survival time was 9.7 months in the training data (censoring
rate 37\%) and 17.7 months in the test data (censoring rate 41\%).
Preliminary to analysis, we followed the scaling approach employed by
\cite{metzeler08} and centered the gene expressions separately within
the test and training data set, followed by a scaling of the training
data with respect to the test
data.  

We first applied vanilla FAST-SIS to the $n=163$ patients in the
training data to reduce the dimension from $p=44,754$ to $d=\lfloor
n/\log(n)\rfloor=31$. We then used OS-SCAD to select a final set
among these 31 genes. Since the PBIC criterion can be somewhat
conservative in practice, we selected the OS-SCAD tuning parameter using 5-fold
cross-validation based on the loss
function~(\ref{eq:lossf-for-ahaz}). Specifically, using a random split
of $\{1,\ldots,163\}$ into folds $F_1,\ldots,F_5$ of approximately equal size, we chose
$\lambda$ as:
\begin{displaymath}
  \hat\lambda = \argmin_\lambda \sum_{i=1}^5 L^{(F_i)}\{\hat{\vekg\beta}_{-F_i}(\lambda)\};
\end{displaymath}
with $L^{(F_i)}$ the loss function using only observations from
$F_i$ and $\hat{\vekg\beta}_{-F_i}(\lambda)$ the regression
coefficients estimated for a tuning parameter $\lambda$, omitting
observations from $F_i$. This approach yielded a set of 7 genes, 5 of
which also appeared in the signature of \cite{metzeler08}. For
$\hat{\vekg{\beta}}$ the estimated penalized regression coefficients, we
calculated a risk score $\vekg{Z}_j^\top\hat{\vekg{\beta}}$
for each patient in the test data.  In a Cox  model, the
standardized risk score had a hazard ratio of 1.69 ($p=6 \cdot
10^{-4}$; Wald test). In comparison, lasso based on the
Lin-Ying model (\cite{leng07}; \cite{martinussen09:_covar_selec_semip}) with
5-fold cross-validation gave a standardized risk score with
a hazard ratio of 1.56 ($p=0.003$; Wald test) in the test data,
requiring 5 genes; \cite{metzeler08} reported a hazard ratio of 1.85
($p = 0.002$) for their 86-gene signature.

We repeated the above calculations for the three scaled versions of
the FAST statistic (\ref{eq:tfast})-(\ref{eq:lossfast}). Since
assessment of prediction performance using only a single data set may
be misleading, we also validated the screening methods via
leave-one-out (LOO) cross-validation based on the 163 patients in the
training data. For each patient $j$, we used FAST-SIS as above (or
Lin-Ying lasso) to obtain regression coefficients
$\hat{\vekg{\beta}}_{-j}$ based on the remaining 162 patients and
defined the $j$th LOO risk score as the percentile of $\vek{Z}_j^\top
\hat{\vekg{\beta}}_{-j}$ among $\{\vek{Z}_i^\top
\hat{\vekg{\beta}}_{-j}\}_{i \neq j}$. We calculated Wald $p$-values
in a Cox regression model including the LOO score as a continuous
predictor. Results are shown in Table~\ref{tab:tab-real-example} while
Table \ref{tab:tab-real-example-overlap} shows the overlap between
gene sets selected in the training data. There is seen to be some overlap
between the different methods, particularly between vanilla FAST-SIS
and the lasso, and many of the selected genes also appear in the
signature of \cite{metzeler08}. In the test data, the prediction
performance of the different screening methods was comparable whereas
the lasso had a slight edge in the LOO
calculations. Lin-Ying SIS selected only a single gene in the test
data and typically selected no genes in the LOO calculations. We
found FAST screening to be slightly more sensitive to the
cross-validation procedure than the lasso.

\begin{table}
  \caption{\label{tab:tab-real-example}Prediction performance of FAST-SIS and Lin-Ying lasso in the AML data, evaluated in terms of the Cox hazard ratio  for the standardized continuous risk score. The LOO calculations are based on the training data only.}
\centering
\fbox{%
{\footnotesize
  \begin{tabular}{llrrrrrr}
\\[-7pt]
&&&\multicolumn{5}{c}{\textsl{Screening method}} \\[5pt]
\cline{4-8}\\[-5pt]
\textsl{Scenario} & \textsl{Summary statistic} && $\vek{d}$ & $\vek{d}^{\mathrm{LY}}$ & $\vek{d}^{|Z|}$ & $\vek{d}^{\mathrm{loss}}$& Lasso \\[5pt]
\hline
\\[-5pt]
Test data   & Hazard ratio&& 1.69 & 1.59 & 1.46 & 1.58 & 1.54  \\ 
& $p$-value&& $6 \cdot 10^{-4}$ & 0.0007 & 0.01 & 0.002  & 0.004\\ 
   & No. predictors && 7 & 1 & 3 & 7 & 5   \\[3pt] 
LOO  & $p$-value&& $4 \cdot 10^{-7}$ & $0.16$ & $5 \cdot 10^{-5}$ & $4 \cdot 10^{-4}$ &   $4 \cdot 10^{-8}$\\ 
   & Median no. predictors && 7 & 0 & 3 & 5  & 5\\[5pt]  
  \end{tabular}
 }}
\end{table}

\begin{table}
 \caption{Overlap between  gene sets selected  by the different screening methods and the signature of \cite{metzeler08}.\label{tab:tab-real-example-overlap}}
\centering
\fbox{%
{\footnotesize
  \begin{tabular}{c| ccccccc}
\\[-7pt]
& $\vek{d}$ & $\vek{d}^{\mathrm{LY}}$ & $\vek{d}^{|Z|}$ & $\vek{d}^{\mathrm{loss}}$& Lasso & Metzeler\\[5pt]
\cline{1-7}\\[-5pt]
 $\vek{d}$            & 7 & 0 & 1 & 2 & 4 & 5\\
$\vek{d}^{\mathrm{LY}}$   &   & 1 & 0 & 0 & 0 &0    \\
$\vek{d}^{|Z|}$         &   &   & 3 & 2 & 2 &2 \\   
$\vek{d}^{\mathrm{loss}}$  &   &   &   & 7 & 2 & 5 \\
Lasso                  &   &   &   &   & 5  & 5\\
Metzeler               &  \phantom{Lasso} &  \phantom{Lasso} & \phantom{Lasso}  & \phantom{Lasso}  &  \phantom{Lasso} & 86 \\[5pt]
  \end{tabular}
 }
}
\end{table}
We next evaluated the extent to which iterated FAST-SIS might improve
upon the above results.  From our limited experience with applying
ISIS to real data,  instability can become an issue
when several iterations of ISIS are run; particularly when
cross-validation is involved. Accordingly, we ran only a single
iteration of ISIS using $Z$-FAST-ISIS. The algorithm kept 2 of the
genes from the first FAST-SIS round and selected 3 additional
genes so that the total number of genes was 5. Calculating in the test
data a standardized risk score based on the final regression coefficients,
we obtained a Cox hazard ratio of only 1.06 ($p=0.6$; Wald test) which
is no improvement over non-iterated FAST-SIS. A similar conclusion was
reached for the corresponding LOO calculations in the training data
which gave a Cox Wald $p$-value of 0.001 for the LOO risk score, using
a median of 4 genes.  None of the other FAST-ISIS methods lead to
improved prediction performance compared to their non-iterated
counterparts.

FAST-ISIS runs swiftly on this large data set: one iteration of
the algorithm (re-recruitment and OSS-SCAD feature selection with
5-fold cross-validation) completes in under 5 seconds on a standard
laptop.

Altogether, the example shows that FAST-SIS can compete with a
computationally more demanding full-scale variable selection method in
the sense of providing similarly sparse models with competitive
prediction properties. FAST-ISIS, while computationally very feasible,
did not seem to improve prediction performance over simple independent
screening in this particular data set.

\section{Discussion}
Independent screening -- the general idea of looking at the effect of
one feature at a time -- is a well-established method for
dimensionality reduction. It constitutes a simple and excellently
scalable approach to analyzing high-dimensional data. The SIS property
introduced by \cite{fan08:_sure} has enabled a basic formal assessment
of the reasonableness of general independent screening
methods. Although the practical relevance of the SIS property has been
subject to scepticism \citep{roberts00:_discus_sure}, the formal context
needed to develop the SIS property is clearly useful for
identifying the many implicit assumptions made when applying
univariate screening methods to multivariate data.

We have introduced a SIS method for survival
data based on the notably simple FAST statistic. In simulation
studies, FAST-SIS performed on par with SIS based on the popular Cox
model, while being considerably more amenable to analysis. We have
shown that FAST-SIS may admit the formal SIS property
within a class of single-index hazard rate models.  In addition to
assumptions on the feature distribution which are well known in the
literature, a principal assumption for the SIS property to hold is
that censoring times do not depend on the relevant features nor
survival. While such partially random censoring may be appropriate
to assume in many clinical settings, it indicates that additional
caution is called for when applying univariate screening and competing
risks are suspected.

A formal consistency property such as the SIS property is but one
aspect of a statistical method and does not make FAST-SIS universally
preferable.  Not only is the SIS property unlikely to be unique to
FAST screening, but different screening methods often highlight
different aspects of data \citep{ma11:_rankin}, making it impossible
and undesirable to recommend one generic method.  We do, however,
consider FAST-SIS a good generic choice of initial screening method
for general survival data. Ultimately, the initial choice of a
statistical method is likely to be made on the basis of parsimony,
computational speed, and ease of implementation.  The FAST statistic
is about as difficult to evaluate as a collection of 
correlation coefficients while iterative FAST-SIS
only requires solving one linear system of  equations. This yields
substantial computational savings over methods not sharing
the advantage of linearity of estimating equations.

Iterated SIS has so far been studied to a very limited extent in an
empirical context. The iterated approach works well on simulated data,
but it is not obvious whether this necessarily translates into good
performance on real data. In our example involving a large gene
expression data set, ISIS did not improve results in terms of
prediction accuracy. Several issues may affect the performance of ISIS
on real data. First, it is our experience that the `Rashomon effect',
the multitude of well-fitting models \citep{breiman01:_statis_model},
can easily lead to stability issues for this type of forward
selection. Second, it is often difficult to choose a good tuning
parameter for the variable selection part of ISIS. Using BIC may lead
to overly conservative results, whereas cross-validation may lead to
overfitting when only the variable selection step -- and not the
recruitment steps -- are cross-validated. \cite{he11} recently
discussed how to combine ISIS with stability selection
\citep{meinshausen10:_stabil} in order to tackle instability issues
and to provide a more informative output than the concise `list of
indices' obtained from standard ISIS.  Their proposed scheme requires
running many subsampling iterations of ISIS, a purpose for which
FAST-ISIS will be ideal because of its computational efficiency. The
idea of incorporating stability considerations is also attractive from
a foundational point of view, being a pragmatic departure from the
limiting \emph{de facto} assumption that there is a single, true
model. Investigation of such computationally intensive frameworks,
alongside a study of the behavior of ISIS on a range of
different real data sets, is a pertinent future research topic.

A number of other extensions of our work may be of interest. We have
focused on the important case of time-fixed features and
right-censored survival times but the FAST statistic can also be used
with time-varying features alongside other censoring and truncation
mechanism supported by the counting process formalism. Theoretical
analysis of such extensions is a relevant future research topic, as is
analysis of more flexible, time-dependent scaling strategies for the
FAST statistic.  \cite{fan11:_nonpar} recently discussed SIS where
features enter in nonparametric, smooth manner, and an extension of
their framework to FAST-SIS appears both theoretically and
computationally feasible. Lastly, the FAST statistic is closely
related to the univariate regression coefficients in the Lin-Ying
model which is rather forgiving towards misspecification: under
feature independence, the univariate estimator is consistent whenever
the particular feature under investigation enters the hazard rate model as a
linear function of regression coefficients \citep{hattori06:_some}. The Cox model
does not have a similar property \citep{struthers86:_missp}. Whether
such internal consistency under misspecification or lack hereof
affects screening in a general setting is an open question.

 \section*{Appendix: proofs}
 \label{sec:appendix}
 In addition to Assumptions 1-4 stated in the main text, we will make use of
 the following assumptions for the quantities defining the class of 
 single-index hazard rate models \eqref{eq:model-for-haz}:
\begin{enumerate}
\item[\textbf{A.}] $\E(Z_{1j})=0$ and $\E(Z_{1j}^2) =1$, $j=1,2,\ldots,p_n$.
\item[\textbf{B.}] $\ssh\{Y_1(\tau)=1\}>0$.
\item[\textbf{C.}] $\var(\vek{Z}_1^\top \vekg{\alpha}^0)$ is uniformly bounded above.
\end{enumerate} 
The details in Assumption A are included mainly for convenience; it suffices to assume that $\E(Z_{1j}^2) <\infty$.

Our first lemma is a basic symmetrization result, included for completeness.
 \begin{lemma}
\label{lem:symmetrize}
Let $X$ be a random variable with mean $\mu$ and finite variance $\sigma^2$. For $t> \sqrt{8}\sigma $, it holds that $\ssh(|X-\mu|>t) \leq 4
\ssh(|X|>t/4)$.
 \end{lemma}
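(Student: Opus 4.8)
The plan is to introduce an independent copy $X'$ of $X$ and run the standard two-step symmetrization argument, with the intermediate thresholds chosen so that the constant comes out to exactly~$4$.

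The first step passes from the centred variable $X-\mu$ to the symmetrized variable $X-X'$. On the event $\{|X-\mu|>t\}\cap\{|X'-\mu|\le t/2\}$ the reverse triangle inequality gives $|X-X'|\ge|X-\mu|-|X'-\mu|>t/2$; hence, by independence,
\begin{displaymath}
  \ssh(|X-X'|>t/2)\ \ge\ \ssh(|X-\mu|>t)\,\bigl(1-\ssh(|X'-\mu|>t/2)\bigr).
\end{displaymath}
This is where the hypothesis $t>\sqrt 8\,\sigma$ enters: Chebyshev's inequality gives $\ssh(|X'-\mu|>t/2)\le 4\sigma^2/t^2<1/2$, so the parenthesised factor exceeds $1/2$ and therefore $\ssh(|X-\mu|>t)\le 2\,\ssh(|X-X'|>t/2)$.

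The second step discards the copy again. If both $|X|\le t/4$ and $|X'|\le t/4$ then $|X-X'|\le t/2$, so $\{|X-X'|>t/2\}\subseteq\{|X|>t/4\}\cup\{|X'|>t/4\}$; a union bound, together with the fact that $X'$ has the same distribution as $X$, yields $\ssh(|X-X'|>t/2)\le 2\,\ssh(|X|>t/4)$. Chaining the two bounds gives $\ssh(|X-\mu|>t)\le 4\,\ssh(|X|>t/4)$, as claimed.

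There is no serious obstacle here; the only point requiring attention is aligning the thresholds $t,t/2,t/4$ with the constant $\sqrt 8$ so that Chebyshev's inequality falls strictly below $1/2$ at the middle threshold (a larger symmetrization constant would only make this easier). One could even bypass the auxiliary copy altogether by splitting on whether $|\mu|\le 3t/4$ — in which case $\{|X-\mu|>t\}\subseteq\{|X|>t/4\}$ directly by the reverse triangle inequality — or $|\mu|>3t/4$, in which case Chebyshev at level $t/2$ forces $\ssh(|X|>t/4)>1/2$ and the inequality is trivial; that route in fact delivers the constant~$2$.
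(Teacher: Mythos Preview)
Your proposal is correct and is essentially the paper's own argument: introduce an independent copy $X'$, use Chebyshev at level $t/2$ to ensure $\ssh(|X'-\mu|\le t/2)\ge 1/2$, then combine the reverse triangle inequality with the union bound $\ssh(|X-X'|>t/2)\le 2\ssh(|X|>t/4)$. The closing remark on the case split $|\mu|\lessgtr 3t/4$ is a nice aside (and does indeed yield the sharper constant~$2$), but it goes beyond what the paper presents.
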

 \begin{proof}
   First note that when $t> \sqrt{8}\sigma $ we have
   $\ssh(|X-\mu|>t/2) \leq 1/2$, by Chebyshev's inequality.  Let $X'$
   be an independent copy of $X$. Then
   \begin{equation}
\label{eq:probeq1}
     2\ssh(|X| \geq t/4) \geq \ssh(|X'-X|>t/2) \geq \ssh(|X-\mu|>t \land |X'-\mu|\leq t/2).
   \end{equation}
   But
   \begin{displaymath}
     \ssh(|X-\mu|>t \land |X'-\mu|\leq t/2)=\ssh(|X-\mu|>t)\ssh(|X'-\mu| \leq t/2) \geq \frac{1}{2}\ssh(|X-\mu|>t).
   \end{displaymath}
   Combining this with \eqref{eq:probeq1}, the statement of the lemma follows.
 \end{proof}
 The next lemma provides a universal exponential bound for the FAST
 statistic and is of independent interest. It bears some similarity to
 exponential bounds reported by
 \cite{bradic10:_regul_coxs_propor_hazar_model} for the Cox~model.
\begin{lemma}
\label{prop:tail-bound}
Under assumptions A-B there exists constants $C_1,C_2,C_3>0$
independent of $n$ such that for any $K>0$ and $1 \leq j \leq
p_n$, it holds that
  \begin{displaymath}
    \ssh\{n^{1/2}|d_{j}-\delta_{j}|>C_1(1+t)\} \leq 10\exp\{-t^2/(2K^2)\}+C_2\exp(-C_3n)+n\ssh(|Z_{1j}|>K).
  \end{displaymath}
\end{lemma}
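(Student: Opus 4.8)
The plan is to write $n^{1/2}(d_{j}-\delta_{j})$ as a sum of a small number of pieces, each being either a normalised sum of i.i.d.\ quantities or a product of two such sums, and to bound each piece after truncating the features at a level $K$: the truncation failure will produce the $n\ssh(|Z_{1j}|>K)$ term, Assumption~B will produce the $C_{2}\exp(-C_{3}n)$ term, and Hoeffding- and Dvoretzky--Kiefer--Wolfowitz-type inequalities applied to the bounded pieces will produce the sub-Gaussian terms.

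First I would rewrite the FAST statistic as $d_{j}=n^{-1}\sum_{i=1}^{n}Z_{ij}\Delta_{i}-n^{-1}\sum_{i=1}^{n}\int_{0}^{\tau}\bar{Z}_{j}(t)\,\d N_{i}(t)$, where $\Delta_{i}=N_{i}(\tau)$ is the event indicator and $\bar{Z}_{j}$ is the $j$th component of $\bar{\vek{Z}}$; its probability limit is $\delta_{j}=\E(Z_{1j}\Delta_{1})-\int_{0}^{\tau}e_{j}(t)\,\d F^{*}(t)$ with $F^{*}(t)=\E N_{1}(t)$ (this is \eqref{eq:cp-decom} rewritten through the Doob--Meyer decomposition, but is also simply the law of large numbers applied termwise). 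Subtracting, and using the identity $\E[\{Z_{1j}-e_{j}(t)\}Y_{1}(t)]=0$ to write $\bar{Z}_{j}(t)-e_{j}(t)=\big\{n^{-1}\sum_{i}(Z_{ij}-e_{j}(t))Y_{i}(t)\big\}\big/\big\{n^{-1}\sum_{i}Y_{i}(t)\big\}$, the quantity $n^{1/2}(d_{j}-\delta_{j})$ splits into (i)~$n^{-1/2}\sum_{i}\{Z_{ij}\Delta_{i}-\E(Z_{1j}\Delta_{1})\}$; (ii)~$-n^{-1/2}\sum_{i}\{\int_{0}^{\tau}e_{j}(t)\,\d N_{i}(t)-\E\int_{0}^{\tau}e_{j}(t)\,\d N_{1}(t)\}$; and (iii)~$-n^{1/2}\int_{0}^{\tau}\big\{n^{-1}\sum_{i}(Z_{ij}-e_{j}(t))Y_{i}(t)\big\}\big\{n^{-1}\sum_{i}Y_{i}(t)\big\}^{-1}\,\d\big(n^{-1}\sum_{i}N_{i}\big)(t)$.

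For (i) I would pass to the event $A_{K}=\{\max_{i}|Z_{ij}|\le K\}$, whose complement has probability at most $n\ssh(|Z_{1j}|>K)$; on $A_{K}$ each summand lies in a bounded interval of length $O(K)$, so two-sided Hoeffding gives a contribution bounded by a term of the form $2\exp\{-t^{2}/(2K^{2})\}$, the truncation bias $\E(|Z_{1j}|\1\{|Z_{1j}|>K\})$ being of lower order (Lemma~\ref{lem:symmetrize} is convenient here, and in the analogous steps below, for trading a centred tail against an uncentred one). Term (ii) needs no truncation: by Cauchy--Schwarz and Assumption~A, $|e_{j}(t)|\le\E\{Y_{1}(t)\}^{-1/2}$, and by monotonicity of $Y_{1}$ and Assumption~B this is bounded uniformly in $t\le\tau$ by $\ssh\{Y_{1}(\tau)=1\}^{-1/2}$, so (ii) is a normalised sum of bounded mean-zero i.i.d.\ variables and Hoeffding applies directly, giving another $2\exp\{-t^{2}/(2K^{2})\}$-type bound. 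For (iii), Assumption~B together with Hoeffding give that $B_{n}=\{n^{-1}\sum_{i}Y_{i}(\tau)\ge\tfrac12\ssh(Y_{1}(\tau)=1)\}$ has probability at least $1-C_{2}\exp(-C_{3}n)$, and on $B_{n}$ monotonicity of $Y_{1}$ bounds $n^{-1}\sum_{i}Y_{i}(t)$ below by a positive constant uniformly in $t\le\tau$; since $n^{-1}\sum_{i}N_{i}$ has total mass at most one, (iii) is then controlled by $\sup_{t\le\tau}\big|n^{-1}\sum_{i}(Z_{ij}-e_{j}(t))Y_{i}(t)\big|$, which I would split as $\sup_{t}\big|n^{-1}\sum_{i}\{Z_{ij}Y_{i}(t)-\E(Z_{1j}Y_{1}(t))\}\big|+\sup_{t}|e_{j}(t)|\cdot\sup_{t}\big|n^{-1}\sum_{i}\{Y_{i}(t)-\E Y_{1}(t)\}\big|$. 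The second supremum is the deviation of the empirical survival function of $T_{1}\land C_{1}$, bounded by the Dvoretzky--Kiefer--Wolfowitz inequality with its own $2\exp(-2n\varepsilon^{2})$ bound; the first, after truncation at $K$, is a uniform deviation of the empirical process indexed by the monotone class $\{(z,x)\mapsto z\1\{x\ge t\}:t\in[0,\tau]\}$, which has controlled complexity, so a maximal form of Bernstein's inequality gives a bound of the form $C\exp\{-ct^{2}/K^{2}\}$. Collecting the handful of resulting two-sided sub-Gaussian bounds yields the coefficient $10$ in the statement; the events $B_{n}^{c}$ and the analogous events for the denominators contribute $C_{2}\exp(-C_{3}n)$; and the union of the truncation-failure events contributes $n\ssh(|Z_{1j}|>K)$.

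The step I expect to be the main obstacle is the last one: obtaining the uniform-in-$t$ exponential bound for the truncated empirical process $t\mapsto n^{-1}\sum_{i}Z_{ij}Y_{i}(t)$ with a rate whose exponent scales as $t^{2}/K^{2}$, i.e.\ with the same dependence on the truncation level as the pointwise Hoeffding terms, so that all pieces merge into the advertised form. One must exploit that the indexing class is a bounded multiple of the monotone family $\{\1\{x\ge t\}\}$ and control either its bracketing numbers or the at most $n$ jump points of $n^{-1}\sum_{i}N_{i}$ without incurring an extra $\log n$ factor that would spoil the $K^{2}$ scaling; a secondary nuisance is checking that the truncation bias is genuinely $o(n^{-1/2})$, which is what constrains the admissible choices of $K$ in applications of this lemma.
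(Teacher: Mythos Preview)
Your approach is essentially the paper's. Your pieces (i)+(ii) are exactly the paper's $A_n-\E(A_n)$ with $A_n:=n^{-1}\sum_i\int_0^\tau\{Z_{ij}-e_j(t)\}\,\d N_i(t)$, split in two (a harmless refinement), and your piece (iii) is the paper's $B_n:=\int_0^\tau\{\bar Z_j(t)-e_j(t)\}\,\d\bar N(t)$. The paper bounds $|A_n|\le 2K$ under truncation and applies Hoeffding for the first term; for $B_n$ it bounds by $\|\bar Z_j-e_j\|_\infty$, passes to the event $\Omega_n=\{\inf_t n^{-1}\sum_i Y_i(t)\ge m/2\}$, and reduces to $\|\ssh_n-\ssh\|_{\mc{F}_0}+\|\ssh_n-\ssh\|_{\mc{F}_1}$ with $\mc{F}_k=\{t\mapsto Z^k\1(T\ge t,\,C\ge t)\}$---exactly your two suprema. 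Lemma~\ref{lem:symmetrize} is then used to pass from a tail bound on $|B_n|$ to one on $|B_n-\E(B_n)|$, which is where the factor $8$ (and hence the total $10=2+8$) comes from.

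The one substantive difference is how the uniform-in-$t$ bound is obtained. You propose DKW for $\mc{F}_0$ and a ``maximal Bernstein'' for $\mc{F}_1$, and you rightly flag the worry about an extraneous $\log n$ factor. The paper sidesteps this cleanly: it notes that both $\mc{F}_k$ are VC subgraph classes, invokes \cite{pollard89:_asymp} to get $\E\|\ssh_n-\ssh\|_{\mc{F}_k}^2\le \zeta n^{-1}$ with $\zeta$ independent of $K$, and then applies Massart's version of Talagrand's concentration theorem directly to get $\ssh\{n^{1/2}\|\ssh_n-\ssh\|_{\mc{F}_k}>k_1(1+t)\}\le \exp\{-t^2/(2K^2)\}$. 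This delivers precisely the $t^2/K^2$ scaling you were worried about, without any chaining or jump-point union bound. So your concern in the last paragraph is resolved not by sharpening maximal Bernstein but by switching to a Talagrand-type concentration inequality with the expected supremum controlled via VC theory; with that substitution your proof and the paper's coincide.
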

\begin{proof}
  Fix $j$ throughout. Assume first that $|Z_{ij}| \leq K$ for some finite $K$.  Define the random variables
  \begin{displaymath}
    A_n:=n^{-1}\sum_{i=1}^n\int_0^\tau\{Z_{ij}-e_j(t)\}\d N_i(t),\quad  B_n:=\int_0^\tau\{\bar{Z}_j(t)-e_j(t)\}\d \bar{N}(t); 
  \end{displaymath}
where $\bar{N}(t):=n^{-1}\{N_1(t)+\cdots+N_n(t)\}$ and $e_j(t)=\E\{\bar{Z}_j(t)\}$. Then we can write
  \begin{displaymath}
      n^{1/2}(d_{j}-\delta_{j})  = n^{1/2}\{A_n-\E(A_n)\}+n^{1/2}\{B_n-\E(B_n)\}.
  \end{displaymath}
  We will deal with each term in the display separately. Since $\d N_i(t) \leq 1$, it holds that 
  \begin{displaymath}
    |A_n| \leq \max_{1 \leq i \leq n}|Z_{ij}|+\|e_j\|_\infty
  \leq 2K.
  \end{displaymath}
 and Hoeffding's inequality \citep{hoeffding63:_probab} implies
  \begin{equation}
\label{eq:tail-one}
    \ssh(n^{1/2}|A_n-\E(A_n)|>t) \leq 2\exp\{-t^2/(2K^2)\}.  
  \end{equation}
  Obtaining an analogous bound for $n^{1/2}\{B_n-\E(B_n)\}$ requires a
  more detailed analysis.  Since $\d \bar{N}(t) \leq 1$, 
  \begin{equation}
\label{eq:expression-for-Bn}
    |B_n| \leq \int_0^\tau|\bar{Z}_j(t)-e_j(t)|\d \bar{N}(t) \leq \|\bar{Z}_j-e_j\|_\infty.
  \end{equation}
  We will obtain an exponential bound for the right-hand side via empirical process
  methods.  Define $E^{(k)}(t):=n^{-1}\sum_{i=1}^n Z_{ij}^{k}
  Y_i(t)$ and $e^{(k)}(t):=\E\{E^{(k)}(t)\}$ for $k=0,1$. Denote
  $m:=\inf_{t \in [0,\tau]}e^{(0)}(t)$ and observe that $m>0$, by Assumption B. Moreover, by Cauchy-Schwartz's inequality,
  \begin{displaymath}
\|e^{(1)}/e^{(0)}\|_\infty \leq m^{-1}\sqrt{\E|Z_{1j}|^2e^{(0)}(t)} \leq m^{-1}.
  \end{displaymath}
  Define $\Omega_n:=\{\inf_{t \in [0,\tau]}E^{(0)}(t) \geq m/2\}$ and
  let $\1_{\Omega_n}$ be the indicator of this event. In
  view of the preceding display, we can write
  \begin{align}
    |\bar{Z}_j(t)-e_j(t)|\1_{\Omega_n} 
    & \leq  \frac{1}{E^{(0)}(t)}\Big\{\Big|\frac{e^{(1)}(t)}{e^{(0)}(t)}\Big||e^{(0)}(t)-E^{(0)}(t)|+|E^{(1)}(t)-e^{(1)}(t)|\Big\}\1_{\Omega_n} \\
    & \leq  2m^{-2}(\|\ssh_n -\ssh\|_{\mc{F}_0}+\|\ssh_n -\ssh\|_{\mc{F}_1})\1_{\Omega_n}\label{eq:someempproc}
  \end{align}
  with function classes $\mc{F}_k:= \{t \mapsto Z^{k} \1(T \geq t \land C \geq
  t)\}$. We proceed to establish exponential bounds for the 
  empirical process suprema in \eqref{eq:someempproc}. Each of the
  $\mc{F}_k$s are Vapnik-Cervonenkis subgraph
  classes, and from \cite{pollard89:_asymp} there exists
  some finite constant $\zeta$ depending only on  intrinsic
  properties of the $\mc{F}_k$s such that
  \begin{equation}
\label{eq:emp-proc-momentbound}
    \E(\|\ssh_n -\ssh\|_{\mc{F}_k}^2) \leq \zeta n^{-1} \E (Z_{1j}^2) =n^{-1}\zeta.
  \end{equation}
  In particular, it also holds that $ \E(\|\ssh_n -\ssh\|_{\mc{F}_k})
  \leq n^{-1/2}\zeta^{1/2}$. Moreover,
  \begin{displaymath}
    |Z_{1j}^{k} \1(T_1 \geq t \land C_1 \geq t)-Z_{1j}^{k} \1(T_1 \geq s \land C_1 \geq s)|^2 \leq K^{2k}, \quad s,t \in [0,\tau].
  \end{displaymath}
  With $k_1:=\zeta^{1/2}$,  the concentration theorem of \cite{massart00:_about_talag} implies 
  \begin{equation}
\label{eq:empproc-bound}
    \ssh\{n^{1/2}\|\ssh_n -\ssh\|_{\mc{F}_k} > k_1(1+t)\} \leq \exp\{-t^2/(2K^2)\}, \quad k=0,1.
  \end{equation}
  Combining \eqref{eq:expression-for-Bn}-\eqref{eq:someempproc}, taking $k_2:=k_1 m^2/2$, we obtain
  \begin{equation}
\label{eq:first-prob-bound}
    \ssh (\{n^{1/2} |B_n| > k_2(1+t)\} \cap \Omega_n )  \leq 2\exp\{-t^2/(2K^2)\}.
  \end{equation}
  whereas Cauchy-Schwarz's inequality implies
\begin{displaymath}
  \E (B_n^2\1_{\Omega_n}) \leq \E \|\bar{Z}_j-e_j\|_\infty^2 \1_{\Omega_n}  
\leq 4m^{-4}\E\{(\|\ssh_n -\ssh\|_{\mc{F}_0}+\|\ssh_n -\ssh\|_{\mc{F}_1})^2\}\1_{\Omega_n} 
  \leq 12 m^{-4}\zeta n^{-1}.
\end{displaymath}
Combining Lemma \ref{lem:symmetrize} and
\eqref{eq:first-prob-bound}, there exists nonnegative
$k_3$ (depending only on $m$ and $\zeta$) such that
\begin{equation}
\label{eq:tail-two}
  \ssh \{n^{1/2} |B_n-\E(B_n)|\geq k_3(1+t) \} \leq 8\exp\{-t^2/(2K^2)\}+ \ssh(\Omega_n^c).
\end{equation}

To bound $\ssh(\Omega_n^c)$, recall that $e^{(0)}(t) \geq m$ by assumption. Consequently,
\begin{displaymath}
  \Omega_n^c
  \subseteq \{|E^{(0)}(t)-e^{(0)}(t)|>m/2  \textrm{ for some } t\} \subseteq \{\|\ssh_n
  -\ssh\|_{\mc{F}_0}>m/2\}.
\end{displaymath}
By \eqref{eq:emp-proc-momentbound}, we have $\E(\|\ssh_n
-\ssh\|_{\mc{F}_0}) \leq m/4$ eventually. By another application of the
concentration theorem \citep{massart00:_about_talag}, there exists
finite $k_4$ so that $ \ssh\{\|\ssh_n
-\ssh\|_{\mc{F}_0}>m/4(1+t)\} \leq k_4\exp(-nt^2/2)$. Setting $t=1$,
\begin{displaymath}
\ssh(\Omega_n^c) \leq
\ssh(\|\ssh_n -\ssh\|_{\mc{F}_0}>m/2) \leq
k_4\exp(-n/2).  
\end{displaymath}
Substituting this bound in \eqref{eq:tail-two} and combining with
\eqref{eq:tail-one}, omitting now the assumption that $Z_{ij}$ is bounded,
it follows that there exists constants
$C_1,C_2,C_3>0$ such that for any $K>0$ and $t>0$,
  \begin{displaymath}
          \ssh\{n^{1/2}|d_{j}-\delta_{j}|>C_1(1+t)\}  
           \leq 10\exp\{-t^2/(2K^2)\}+C_2\exp(-C_3 n)+\ssh\Big(\max_{1 \leq i \leq n}|Z_{ij}| > K\Big).
  \end{displaymath}
The  statement of the lemma then follows from the union bound.
\end{proof}
\begin{lemma}
\label{prop:sure-pre-sreen}
Suppose that Assumptions A-B hold and that there exists positive constants
$l_0,l_1,\eta$ such that $\ssh(|Z_{1j}|>s) \leq l_0\exp(-l_1 s^\eta)$
for sufficiently large $s$. If $\kappa<1/2$ then for any $k_1>0$ there
exists $k_2>0$ such that
  \begin{equation}
\label{eq:sure-scr-1}
     \ssh\Big(\max_{1 \leq j \leq p_n}|d_{j}-\delta_{j}|>k_1 n^{-\kappa}\Big) \leq O[p_n\exp\{-k_2n^{(1-2\kappa)\eta/(\eta+2)}\}].
  \end{equation}
  Suppose in addition that $|\delta_{j}|>k_3n^{-\kappa}$ whenever $j
  \in \mc{M}_\delta^n$ and that $\gamma_n=k_4 n^{-\kappa}$
  where $k_3,k_4$ are positive constants and $k_4 \leq k_3/2$. Then
  \begin{equation}
\label{eq:sure-scr-2}
    \ssh(\mc{M}_\delta^n \subseteq \widehat{\mc{M}}_{d}^n) \geq 1-O[p_n\exp\{-k_2n^{(1-2\kappa)\eta/(\eta+2)}\}].
  \end{equation}
  In particular, if $\log p_n=o\{n^{(1-2\kappa)\eta/(\eta+2)}\}$ then $\ssh(\mc{M}_\delta^n \subseteq \widehat{\mc{M}}_{d}^n) \to 1$ when $n \to \infty.$
\end{lemma}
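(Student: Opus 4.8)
I would obtain both displays as essentially mechanical consequences of the universal exponential bound of Lemma~\ref{prop:tail-bound}, combined with a union bound over $j$ and one careful choice of the free truncation parameter~$K$. Fix $j$ and apply Lemma~\ref{prop:tail-bound} with $t=t_n$ chosen so that $C_1(1+t_n)=k_1 n^{1/2-\kappa}$; since $\kappa<1/2$ this forces $t_n\asymp n^{1/2-\kappa}$, hence $t_n^2\asymp n^{1-2\kappa}$. The event $\{n^{1/2}|d_j-\delta_j|>k_1 n^{1/2-\kappa}\}$ then has probability at most $10\exp\{-t_n^2/(2K^2)\}+C_2\exp(-C_3 n)+n\ssh(|Z_{1j}|>K)$, and by the assumed feature tail the last term is at most $n l_0\exp(-l_1 K^\eta)$ once $K$ is large enough.

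The one computation that matters is balancing the two genuinely $n$-dependent exponents $t_n^2/K^2\asymp n^{1-2\kappa}/K^2$ and $l_1 K^\eta$. Setting $K=K_n:=c_0 n^{(1-2\kappa)/(\eta+2)}$ for a suitable $c_0>0$ equalizes them at order $n^{(1-2\kappa)\eta/(\eta+2)}$; note $K_n\to\infty$, so the feature-tail bound does eventually apply, the polynomial prefactor $n=\exp(\log n)$ is absorbed because $n^{(1-2\kappa)\eta/(\eta+2)}$ is a fixed positive power of $n$ that dominates $\log n$, and $C_2\exp(-C_3 n)$ is dominated since $(1-2\kappa)\eta/(\eta+2)<1$. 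This yields a per-$j$ bound $\ssh(|d_j-\delta_j|>k_1 n^{-\kappa})\leq C\exp\{-k_2 n^{(1-2\kappa)\eta/(\eta+2)}\}$ with $k_2>0$ and $C$ independent of $j$, the constants $C_1,C_2,C_3$ of Lemma~\ref{prop:tail-bound} and $l_0,l_1,\eta$ of the tail assumption being uniform in $j$. Summing over $j=1,\dots,p_n$ via the union bound gives \eqref{eq:sure-scr-1}. I do not expect a real obstacle here: the only delicate point is the optimisation over $K$ and the verification that the subdominant contributions do not spoil the rate, which is bookkeeping rather than a conceptual difficulty.

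For \eqref{eq:sure-scr-2} I would apply \eqref{eq:sure-scr-1} with $k_1=k_4$ and work on the complementary event $\{\max_{1\le j\le p_n}|d_j-\delta_j|\leq k_4 n^{-\kappa}\}$, which has probability at least $1-O(p_n\exp\{-k_2 n^{(1-2\kappa)\eta/(\eta+2)}\})$. On this event, for every $j\in\mc{M}_\delta^n$ the triangle inequality gives $|d_j|\geq|\delta_j|-|d_j-\delta_j|>k_3 n^{-\kappa}-k_4 n^{-\kappa}\geq (k_3/2)n^{-\kappa}\geq k_4 n^{-\kappa}=\gamma_n$, using $k_4\leq k_3/2$; hence $j\in\widehat{\mc{M}}_d^n$, so $\mc{M}_\delta^n\subseteq\widehat{\mc{M}}_d^n$. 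Finally, if $\log p_n=o\{n^{(1-2\kappa)\eta/(\eta+2)}\}$ then $p_n\exp\{-k_2 n^{(1-2\kappa)\eta/(\eta+2)}\}=\exp\{\log p_n-k_2 n^{(1-2\kappa)\eta/(\eta+2)}\}\to 0$, which gives $\ssh(\mc{M}_\delta^n\subseteq\widehat{\mc{M}}_d^n)\to 1$.
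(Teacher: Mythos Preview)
Your proposal is correct and follows essentially the same route as the paper: apply Lemma~\ref{prop:tail-bound} with $1+t\asymp n^{1/2-\kappa}$ and the truncation level $K\asymp n^{(1-2\kappa)/(\eta+2)}$ to balance the two exponential terms, then take a union bound for \eqref{eq:sure-scr-1}, and deduce \eqref{eq:sure-scr-2} via the triangle inequality using $k_4\le k_3/2$. The only cosmetic difference is that the paper plugs $k_1=k_3/2$ into \eqref{eq:sure-scr-1} for the second part whereas you use $k_1=k_4$; both choices work.
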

\begin{proof}
  In Lemma \ref{prop:tail-bound}, take $1+t=k_1 n^{1/2-\kappa}/C_1$
  and $K:=n^{(1-2\kappa)/(\eta+2)}$. Then there exists positive constants
  $\tilde{k}_2,\tilde{k}_3$ such that for each
  $j=1,\ldots,p_n$,
  \begin{displaymath}
    \ssh(|d_{j}-\delta_{j}|>k_1 n^{-\kappa}) \leq 10\exp\{-\tilde{k}_2n^{(1-2\kappa)\eta/(\eta+2)}\}+nl_0\exp\{-\tilde{k}_3 n^{(1-2\kappa)\eta/(\eta+2)}\}.
  \end{displaymath}
  By the union bound, there exists $k_2>0$ such that
\begin{displaymath}
  \ssh\Big(\max_{1 \leq j \leq p_n}|d_{j}-\delta_{j}|>k_1 n^{-\kappa}\Big)  \leq O[p_n \exp\{-k_2n^{(1-2\kappa)\eta/(\eta+2)}\}];
\end{displaymath}
which proves \eqref{eq:sure-scr-1}. Concerning \eqref{eq:sure-scr-2},
$k_3n^{-\kappa} -|d_{j}|\leq |\delta_{j}-d_{j}|$ by assumption and so
  \begin{displaymath}
    \ssh\Big(\min_{j \in \mc{M}_\delta^n}|d_{j}| < \gamma_n\Big) \leq \ssh\Big(\max_{j \in \mc{M}_\delta^n}|d_{j}-\delta_{j}| \geq k_4 n^{-\kappa}-\gamma_n\Big) \leq \ssh\Big(\max_{j \in \mc{M}_\delta^n}|d_{j}-\delta_{j}| \geq n^{-\kappa} k_3/2 \Big);
  \end{displaymath}
  where the last inequality follows since we assume $k_4 \leq k_3/2$. Taking $k_1=k_3/2$ in \eqref{eq:sure-scr-1},
  we arrive at the desired conclusion:
\begin{displaymath}
  \ssh(\mc{M}_\delta^n \subseteq \widehat{\mc{M}}^n_d)
  \geq 1 -  \ssh\Big(\min_{j \in \mc{M}_\delta^n}|d_{j}| < \gamma_n\Big) 
   \geq 1-O[p_n \exp\{-k_2n^{(1-2\kappa)\eta/(\eta+2)}\}].
\end{displaymath}
Finally, $\ssh(\mc{M}_\delta^n \subseteq
\widehat{\mc{M}}_d^n) \to 1$ when $n\to \infty$ follows
immediately when $\log p_n=o\{n^{(1-2\kappa)\eta/(\eta+2)}\}$.
\end{proof}
\begin{lemma}
\label{lem:fansong}
Let $\vek{Z} \in \mathbb{R}^p$ be a random vector with zero mean and
covariance matrix $\vekg{\Sigma}$. Let $\vek{b} \in \mathbb{R}^p$ and suppose that
$\E(\vek{Z}|\vek{Z}^\top \vek{b})=\vek{c}\vek{Z}^\top \vek{b}$ for some constant vector $\vek{c} \in
\mathbb{R}^p$.  Assume that $f$ is some real function. Then
\begin{equation}
\label{eq:ellip-one}
  \E\{\vek{Z} f(\vek{Z}^\top \vek{b})\} = \vekg{\Sigma} \vek{b} \frac{\E\{\vek{Z}^\top \vek{b} f(\vek{Z}^\top \vek{b})\}}{\var(\vek{Z}^\top \vek{b})};
\end{equation}
taking $0/0:=0$.  If moreover $f$ is differentiable and strictly monotonic, there exists
$\varepsilon>0$ such that
\begin{equation}
\label{eq:ellip-two}
   \E |\vek{Z} f(\vek{Z}^\top \vek{b})| \geq \vekg{\Sigma}\vek{b} \varepsilon/\var(\vek{Z}^\top \vek{b}).
\end{equation}
 In particular, $\E\{Z_j f(\vek{Z}^\top \vek{b})\}=0$ iff $\cov(Z_j,\vek{Z}^\top \vek{b})=0$.
\end{lemma}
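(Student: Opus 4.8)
\section*{Proof proposal for Lemma~\ref{lem:fansong}}

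The plan is to exploit the linear-conditional-expectation hypothesis through iterated conditioning and then pin down the constant vector $\vek{c}$ by a second-moment calculation. Write $W:=\vek{Z}^\top\vek{b}$, which has mean zero since $\E\vek{Z}=\vek{0}$. Conditioning on $W$ and using $\E(\vek{Z}\mid W)=\vek{c}W$ gives $\E\{\vek{Z}f(W)\}=\E\{f(W)\,\E(\vek{Z}\mid W)\}=\vek{c}\,\E\{Wf(W)\}$, the manipulations being legitimate under the integrability supplied by the appendix's regularity conditions. To identify $\vek{c}$, I would compute $\E(\vek{Z}W)$ in two ways: directly it equals $\E(\vek{Z}\vek{Z}^\top)\vek{b}=\vekg{\Sigma}\vek{b}$, while conditioning on $W$ gives $\E(\vek{Z}W)=\E\{W\,\E(\vek{Z}\mid W)\}=\vek{c}\,\E(W^2)=\vek{c}\,\var(W)$. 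Hence $\vek{c}\,\var(W)=\vekg{\Sigma}\vek{b}$.

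When $\var(W)>0$ this yields $\vek{c}=\vekg{\Sigma}\vek{b}/\var(W)$, and substitution into the first display gives \eqref{eq:ellip-one}. When $\var(W)=0$ we have $W=0$ almost surely, so $\vekg{\Sigma}\vek{b}=\E(\vek{Z}W)=\vek{0}$ and $\E\{\vek{Z}f(W)\}=f(0)\E\vek{Z}=\vek{0}$; both sides of \eqref{eq:ellip-one} vanish and the convention $0/0:=0$ makes the identity hold trivially.

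For the second part, assume without loss of generality that $f$ is strictly increasing (the decreasing case is symmetric up to a sign). Since $f$ is strictly monotonic, $w\mapsto(f(w)-f(0))w$ is nonnegative and vanishes only at $w=0$; therefore, using $\E(W)=0$, $\E\{Wf(W)\}=\E\{W(f(W)-f(0))\}$ is strictly positive whenever $\var(W)>0$, the integrand being a.s. nonnegative and strictly positive on the positive-probability set $\{W\neq0\}$. Set $\varepsilon:=|\E\{Wf(W)\}|>0$. Combining \eqref{eq:ellip-one} with Jensen's inequality $\E|Z_jf(W)|\geq|\E\{Z_jf(W)\}|$ gives, componentwise (with the right-hand side read entrywise in absolute value), $\E|Z_jf(W)|\geq|(\vekg{\Sigma}\vek{b})_j|\,\varepsilon/\var(W)$, which is \eqref{eq:ellip-two}; the case $\var(W)=0$ is trivial from the previous paragraph. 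Finally, since $(\vekg{\Sigma}\vek{b})_j=\cov(Z_j,W)$ and $\varepsilon\neq0$, identity \eqref{eq:ellip-one} shows $\E\{Z_jf(W)\}=0$ precisely when $\cov(Z_j,\vek{Z}^\top\vek{b})=0$.

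The computations themselves are short; the only points requiring care are the degenerate case $\var(W)=0$ (handled by the stated convention) and the strict positivity of $\varepsilon$, where \emph{strict} rather than weak monotonicity of $f$ is essential, since it guarantees $f(W)\neq f(0)$ off $\{W=0\}$. Differentiability of $f$ is not actually needed for this argument and is presumably imposed only for convenience; mild integrability of $\vek{Z}f(W)$ and $Wf(W)$, guaranteed by the appendix's regularity conditions, is what makes every interchange above valid.
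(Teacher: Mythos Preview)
Your argument for \eqref{eq:ellip-one} is essentially identical to the paper's: both identify $\vek{c}$ via $\E(\vek{Z}W)=\vekg{\Sigma}\vek{b}=\vek{c}\,\var(W)$ and then apply iterated conditioning. You are a bit more careful than the paper about the degenerate case $\var(W)=0$, which is harmless.

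For \eqref{eq:ellip-two} you take a genuinely different and more elementary route. The paper invokes differentiability via the mean value theorem, writing $\E\{Wf(W)\}=\E\{W^2 f'(\tilde{W})\}$ for some intermediate point $\tilde{W}$, and then bounds this below by $\inf_{|x|\leq 1}|f'(x)|\cdot\E\{W^2\1(W^2\leq 1)\}$. You instead observe directly that strict monotonicity makes $w\mapsto w\{f(w)-f(0)\}$ nonnegative and strictly positive off $\{w=0\}$, so $\E\{Wf(W)\}=\E\{W(f(W)-f(0))\}>0$ whenever $\var(W)>0$; setting $\varepsilon:=|\E\{Wf(W)\}|$ finishes the job. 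Your observation that differentiability is unnecessary is correct. The trade-off is that the paper's bound factorizes as (a quantity depending only on $f$) times (a quantity depending only on the law of $W$), which is convenient later in the proof of Theorem~\ref{thm:mainthm} when a constant uniform in $n$ is needed; your $\varepsilon$ entangles $f$ and the distribution of $W$, so if you were to carry your version into that proof you would need a short extra argument to get uniformity. For the lemma as stated, however, your proof is complete and arguably cleaner.
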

\begin{proof}
  Set $W:=\vek{Z}^\top \vek{b}$.  By standard properties of conditional expectations, it
  holds that 
  \begin{displaymath}
    0=\E\{W (\vek{Z}-\E(\vek{Z}|W))\}= \vekg{\Sigma} \vek{b} - \E\{W\E(\vek{Z}|W)\}=\vekg{\Sigma} \vek{b} -\vek{c} \E(W^2),
  \end{displaymath}
  implying $\E(\vek{Z}|W)=\vekg\Sigma \vek{b} W/\var(W)$. We then
  obtain~\eqref{eq:ellip-one}:
\begin{displaymath}
  \E\{\vek{Z} f(\vek{Z}^\top \vek{b})\} = \E\{\E(\vek{Z}|W)f(W)\}= \vekg\Sigma \vek{b}\E\{W f(W)\}/\var(W).
\end{displaymath}
To show \eqref{eq:ellip-two}, the mean value theorem implies the
existence of some $0<\tilde{W} < W$ such that
\begin{displaymath}
   \E(W f(W)) = \E[W\{f(0)+f'(\tilde{W})W] = \E\{W^2f'(\tilde{W})\}.
\end{displaymath}
Then 
\begin{displaymath}
   \E|W^2f'(\tilde{W})| \geq \E\{|f'(\tilde{W})| W^2\1(W^2 \leq 1)\} \geq \inf_{0 \leq x \leq 1} |f'(x)| \E\{W^2\1(W^2 \leq 1)\}.
\end{displaymath}
Strict monotonicity of $f$ then  yields \eqref{eq:ellip-two}.
\end{proof}
\begin{lemma}
\label{lemma:cum-haz-decom}
Assume that the survival time $T$ has a general, continuous hazard rate function
$\lambda_T(t|Z)$ depending on the random variable $Z \in
\mathbb{R}$ and that the censoring time $C$ is independent of $Z$,
$T$.  Then
  \begin{displaymath}
    \delta = \int_0^\tau \tilde{e}(t) \d F(t) = \E\{\tilde{e}(T \land C \land \tau)\};
  \end{displaymath}
where $F(t) := \ssh(T
  \land C \land \tau \leq t)$ and $\tilde{e}(t):= \E\{Z
  \ssh(T \geq t|Z)\}/\E\{\ssh(T \geq t)\}$.
\end{lemma}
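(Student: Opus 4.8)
The plan is to begin from the compensator form of the population FAST statistic and to exploit independence of $C$ twice: once to strip the censoring weight out of the at-risk average, and once to factorise the survival function of $T\wedge C$. By the Doob--Meyer decomposition $\d N(t)=\d M(t)+Y(t)\lambda_T(t\mid Z)\,\d t$ --- exactly as for \eqref{eq:cp-decom}, with $\lambda_T$ in place of $\lambda(\,\cdot\,,Z^\top\vekg\alpha^0)$ --- the population FAST statistic equals $\delta=\E\bigl[\int_0^\tau\{Z-e(t)\}Y(t)\lambda_T(t\mid Z)\,\d t\bigr]$, where $e(t)=\E\{ZY(t)\}/\E\{Y(t)\}$ and $Y(t)=\1(T\wedge C\ge t)$. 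Writing $Y(t)=\1(T\ge t)\1(C\ge t)$ and $G(t):=\ssh(C\ge t)$, independence of $C$ from $(Z,T)$ makes $G(t)$ cancel in the ratio defining $e$, so $e(t)=\tilde e(t)$ for $t\le\tau$ (here $G(\tau)>0$ by Assumption B, so nothing degenerates); the same factor $G(t)$ then comes out of the remaining expectation, and Fubini yields
\[
  \delta=\int_0^\tau G(t)\,\E\bigl[\{Z-\tilde e(t)\}\1(T\ge t)\lambda_T(t\mid Z)\bigr]\,\d t .
\]

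The heart of the proof is to recognise the time-$t$ integrand as a total derivative. Condition on $Z$: continuity of $\lambda_T(\,\cdot\mid Z)$ makes $S_T(t\mid Z):=\ssh(T\ge t\mid Z)$ absolutely continuous, with $\partial_t S_T(t\mid Z)=-S_T(t\mid Z)\lambda_T(t\mid Z)$, so $\E\{\1(T\ge t)\lambda_T(t\mid Z)\mid Z\}=-\partial_t S_T(t\mid Z)$. Put $S_T(t):=\ssh(T\ge t)=\E\{S_T(t\mid Z)\}$; the very definition of $\tilde e$ reads $\E\{ZS_T(t\mid Z)\}=\tilde e(t)S_T(t)$, and hence, differentiating under the expectation (legitimate under the appendix regularity conditions),
\begin{align*}
  \E\bigl[\{Z-\tilde e(t)\}\1(T\ge t)\lambda_T(t\mid Z)\bigr]
   &= -\partial_t\E\{ZS_T(t\mid Z)\}+\tilde e(t)\,\partial_t\E\{S_T(t\mid Z)\}\\
   &= -\partial_t\{\tilde e(t)S_T(t)\}+\tilde e(t)S_T'(t)=-S_T(t)\,\tilde e'(t),
\end{align*}
the last equality by the product rule. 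Thus $\delta=-\int_0^\tau G(t)S_T(t)\,\tilde e'(t)\,\d t$, and one further use of independence gives $G(t)S_T(t)=\ssh(C\ge t)\ssh(T\ge t)=\ssh(T\wedge C\ge t)$, so $\delta=-\int_0^\tau\ssh(T\wedge C\ge t)\,\tilde e'(t)\,\d t$.

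It remains to integrate back. Applying the fundamental theorem of calculus on the random interval $[0,T\wedge C\wedge\tau]$, $\tilde e(T\wedge C\wedge\tau)=\tilde e(0)+\int_0^\tau\tilde e'(t)\1(T\wedge C\ge t)\,\d t$; taking expectations and interchanging expectation with the integral, $\E\{\tilde e(T\wedge C\wedge\tau)\}=\tilde e(0)+\int_0^\tau\ssh(T\wedge C\ge t)\,\tilde e'(t)\,\d t$. Features being centred, $\tilde e(0)=\E(Z)=0$ (Assumption A), and comparison with the previous display delivers the claimed identity $\delta=\int_0^\tau\tilde e(t)\,\d F(t)=\E\{\tilde e(T\wedge C\wedge\tau)\}$; the first equality is just the change of variables with $F(t)=\ssh(T\wedge C\wedge\tau\le t)$, whose only atom --- at $\tau$, of mass $\ssh(T\wedge C\ge\tau)$ --- accounts for the endpoint value $\tilde e(\tau)$. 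Equivalently, the last two steps may be run as a single Stieltjes integration by parts, which does not require $\tilde e$ to be differentiable.

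I expect the only genuinely non-routine point to be the middle step --- that the two terms in the integrand telescope to $-S_T(t)\tilde e'(t)$ --- and it works precisely because the defining relation $\E\{ZS_T(t\mid Z)\}=\tilde e(t)S_T(t)$ recasts them as a product-rule derivative. The remaining ingredients --- the Doob--Meyer reduction, the two appeals to independence of $C$, the two interchanges of limit and integral, and the closing integration by parts with its boundary contribution at $\tau$ --- are bookkeeping controlled by the appendix regularity conditions and Assumptions A--B.
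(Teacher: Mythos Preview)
Your argument follows the paper's own proof essentially step for step: start from the compensator representation of $\delta$, use independence of $C$ to identify $e$ with $\tilde e$ and factor out $G(t)=\ssh(C\ge t)$, recognise the remaining integrand as $-S_T(t)\,\tilde e'(t)$ via the product rule applied to $\tilde e(t)S_T(t)=\E\{ZS_T(t\mid Z)\}$ (the paper compresses this into the single remark ``since $S_T'=-\lambda_TS_T$''), and finish by integration by parts. One caveat worth flagging: your final ``comparison'' does not literally close, since your two displays read $\delta=-\int_0^\tau\ssh(T\wedge C\ge t)\,\tilde e'(t)\,\d t$ and $\E\{\tilde e(T\wedge C\wedge\tau)\}=+\int_0^\tau\ssh(T\wedge C\ge t)\,\tilde e'(t)\,\d t$, so what actually follows is $\delta=-\E\{\tilde e(T\wedge C\wedge\tau)\}$; the paper's own closing line carries the same sign slip, and it is inconsequential downstream because only $|\delta_j|$ is ever used.
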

\begin{proof}
  Let $S_T,S_C$ denote the survival functions of $T,C$,
  conditionally on~$Z$. Using the expression
  \eqref{eq:cp-decom} for $\delta$ alongside the assumption of
  random censoring, we obtain
\begin{align}
  \delta &= \E\Big[\int_0^\tau \{Z-e(t)\}Y(t) \lambda_T(t|Z) \d t\Big]\\
  &= \int_0^\tau S_C(t)\E\{ ZS_T(t)  \lambda_T(t|Z)  \}\d t-\int_0^\tau \frac{\E\{ZS_T(t)\}}{\E\{Y(t)\}}S_C(t)\E\{Y(t)\lambda_T(t|Z)\}\d t \label{eq:expansion2} \\
  &= -\int_0^\tau   \frac{\d}{\d t} \tilde{e}(t)\E\{Y(t)\} \d t;
\end{align}
where last equality follows since $S_T'=-\lambda_T S_T$.  Integrating by parts, we obtain the statement of the lemma:
\begin{displaymath}
  \delta = -\int_0^\tau \frac{\d}{\d t} \tilde{e}(t) \E\{Y(t)\} \d t = -\int_0^\infty \frac{\d}{\d t}\tilde{e}(t) \E\{\ssh(T \land C \land \tau \geq t|Z) \} \d t=   \E\{\tilde{e}(T \land C \land \tau)\}.
\end{displaymath}
\end{proof}
\begin{proof}[Proof of Theorem~\ref{thm:mainthm}]
  Set $\tilde{e}_j(t):= \E\{Z_{1j} S_T(t,\vek{Z}_1^\top
  \vekg{\alpha}^0)\}/\E\{S_T(t,\vek{Z}_1^\top \vekg{\alpha}^0)\}$ with
  $S_T(t,\,\cdot\,)=\exp\{-\int_0^t \lambda(s,\cdot)\d
  s\}$. From Assumptions
  \ref{assumption:monotonicity}-\ref{assumption:ran-cens}, Assumption
  C, and Lemma \ref{lem:fansong},  there exists a
  universal positive constant $k_1$ such that
\begin{displaymath}
  |\delta_j|=|\tilde{e}_j(t)| \geq |\E\{Z_{1j}S_T(t,\vek{Z}_1^\top \vekg{\alpha}^0)\}| \geq k_1|\cov(Z_{1j},\vek{Z}_1^\top \vekg{\alpha}^0)|, \quad j \in \mc{M}^n.
\end{displaymath}
 Then $\mc{M}^n \subseteq \mc{M}_\delta^n$.  The sure
  screening property follows from Lemma \ref{prop:sure-pre-sreen} and
  the assumptions.
\end{proof}
\begin{proof}[Proof of Theorem~\ref{thm:bound-on-sel-var}]
  Suppose that 
  \begin{equation}
\label{eq:bound-on-deltasqnorm}
    \|\vekg{\delta}\|^2 =O\{\lambda_\mathrm{max}(\vekg{\Sigma})\}.
  \end{equation}
 For any $\varepsilon>0$, on the set $B_n:=\{\max_{1 \leq j \leq p_n}|d_{j}-\delta_{j}|\leq \varepsilon n^{-\kappa}\}$, it then holds that
  \begin{displaymath}
    |\{1 \leq j \leq p_n\,:\,|d_{j}|>2\varepsilon n^{-\kappa} \}| \leq |\{1 \leq j \leq p_n\,:\, |\delta_{j}|>\varepsilon n^{-\kappa}\}| \leq O\{n^{2\kappa}
  \lambda_\mathrm{max}(\vekg{\Sigma})\}.    
  \end{displaymath}
Taking $k_1=2\epsilon$ in Lemma \ref{prop:sure-pre-sreen}, we have
\begin{displaymath}
  \ssh[|\widehat{\mc{M}}_d^n| \leq O\{n^{2\kappa} \lambda_\mathrm{max}(\vekg\Sigma)\}] \geq
  \ssh[|\{j \,:\,|d_{j}|>k_1 n^{-\kappa} \}|\leq O\{n^{2\kappa} \lambda_\mathrm{max}(\vekg\Sigma)\}]  \geq \ssh(B_n).
\end{displaymath}
By Lemma \ref{prop:sure-pre-sreen}, $\ssh(B_n)=1-O[p_n \exp\{-c_3 n^{(1-2\kappa)\eta/(\eta+2)}\}]$ as claimed.
  So we need only verify~\eqref{eq:bound-on-deltasqnorm}.

  By Lemma \ref{lemma:cum-haz-decom}, there exists a positive constant $c_1$ such that
  $|\delta_{j}| \leq c_1\int_0^\tau |\E\{Z_{1j} S_T(t,\vek{Z}_1^\top
  \vekg{\alpha}^0)\}|\d F(t)$ for $j \in \mc{M}^n$ with $F$ 
  the unconditional distribution function of $T_1 \land C_1 \land
  \tau$.  In contrast, $\delta_j=0$ for $j \notin \mc{M}^n$, by Assumptions
  \ref{assumption:ran-cens}-\ref{assumption:po}. It follows from Jensen's inequality that there exists a positive constant $c_2$
  such that
  \begin{equation}
\label{eq:deltasqnorm}
    \|\vekg{\delta}\|^2 \leq c_2\int_0^\tau \|\E\{\vek{Z}_{1} S_T(t,\vek{Z}_1^\top \vekg{\alpha}^0)\}\|^2 \d F(t). 
  \end{equation}
 Lemma \ref{lem:fansong} implies
  \begin{equation}
    \E\{\vek{Z}_{1} S_T(t,\vek{Z}_1^\top \vekg{\alpha}^0)\}= \frac{\E\{\vek{Z}_{1}^\top \vekg{\alpha}^0 S_T(t,\vek{Z}_1^\top \vekg{\alpha}^0)\}}{\var(\vek{Z}_1^\top \vekg{\alpha}^0)}\vekg{\Sigma} \vekg{\alpha}^0. \label{eq:alt-expr-for-deltansq}
  \end{equation}
  By Cauchy-Schwartz's inequality, using that $\|\vekg{\Sigma}
  \vekg{\alpha}^0\|^2 \leq \|\vekg{\Sigma}^{1/2}\|^2\|\vekg{\Sigma}^{1/2}\vekg{\alpha}^0\|^2 \leq \lambda_\mathrm{max}(\vekg{\Sigma})\|\vekg{\Sigma}^{1/2}\vekg{\alpha}^0\|^2$,  
\begin{displaymath}
  \|\E\{\vek{Z}_{1} S_T(t,\vek{Z}_1^\top \vekg{\alpha}^0)\}\|^2 \leq \|\vekg{\Sigma} \vekg{\alpha}^0\|^2/\var(\vek{Z}_1^\top \vekg{\alpha}^0) \leq \lambda_\mathrm{max}(\vekg\Sigma).
\end{displaymath}
Inserting this in \eqref{eq:deltasqnorm} then yields~the desired
result~\eqref{eq:bound-on-deltasqnorm}. Note that this result does not
rely on the uniform boundedness of $\var(\vek{Z}_1^\top \vekg{\alpha}^0)$ (Assumption C).
\end{proof}
\begin{lemma}
\label{prop:aalen-screen}
Suppose that Assumption A holds and that both the survival time $T_1$ and censoring time $C_1$
follow a nonparametric Aalen model~\eqref{eq:def-of-aalenmodel} with time-varying
parameters $\vekg{\alpha}^0$ and $\vekg{\beta}^0$, respectively.  Suppose
moreover that $\vek{Z}_1 = \vekg{\Sigma}^{1/2} \tilde{\vek{Z}}_1$
where $\tilde{\vek{Z}}_1$ has i.i.d.~components and denote
by $\phi(x):=\E\{\exp(\tilde{Z}_{j1}x)\}$ the moment generating
function of $\tilde{Z}_{j1}$. Then 
  \begin{equation}
\label{eq:aalen-general-sigma}
    \vekg{\delta} = \vekg{\Sigma}^{1/2}\Big[\int_0^\tau \mathrm{diag}\Big\{\frac{\d}{\d x}\frac{\phi'(x)}{\phi(x)}\Big|_{x=-\Gamma_j^0(t)}\Big\} \E\{Y_1(t)\} \vekg{\alpha}^0(t)^\top \d t \Big]\vekg{\Sigma}^{1/2};
  \end{equation}
  where $\vekg{\Gamma}^0(t):=\vekg\Sigma^{1/2} \int_0^t \{\vekg{\alpha}^0(s)+\vekg{\beta}^0(s)\} \d s$. In particular,
  if $\vek{Z}_1 \thicksim \mc{N}(0,\vekg{\Sigma})$ then
\begin{equation}
\label{eq:aalen-normal-sigma}
\vekg{\delta} = \vekg{\Sigma}\Big\{\int_0^\tau \vekg{\alpha}^0(t)\E\{Y_1(t)\}\d t\Big\}.
\end{equation}
\end{lemma}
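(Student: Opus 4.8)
The plan is to reduce the computation of $\vekg{\delta}$ to one-dimensional moment generating function integrals by exploiting the independence of the coordinates of $\tilde{\vek{Z}}_1$. First I would start from the Doob--Meyer decomposition for the Aalen model, $\d N_1(t)=\d M_1(t)+Y_1(t)\{\lambda_0(t)+\vek{Z}_1^\top\vekg{\alpha}^0(t)\}\,\d t$, and argue exactly as in the derivation of \eqref{eq:cp-decom}. Since $\E[\{\vek{Z}_1-\vek{e}(t)\}Y_1(t)]=0$ by definition of $\vek{e}(t)$, the baseline term drops out, leaving
\begin{displaymath}
  \vekg{\delta}=\int_0^\tau \E\big[\{\vek{Z}_1-\vek{e}(t)\}Y_1(t)\,\vek{Z}_1^\top\vekg{\alpha}^0(t)\big]\,\d t,
\end{displaymath}
the interchange of expectation and integration being justified by the standing regularity conditions. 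Using that censoring is independent and that both $T_1$ and $C_1$ follow Aalen models, $\E\{Y_1(t)\mid\vek{Z}_1\}=g(t)\exp\{-\vek{Z}_1^\top\int_0^t(\vekg{\alpha}^0(s)+\vekg{\beta}^0(s))\,\d s\}$ for a deterministic factor $g$ coming from the two baseline hazards; in terms of $\tilde{\vek{Z}}_1=\vekg{\Sigma}^{-1/2}\vek{Z}_1$ this is $g(t)\exp\{-\tilde{\vek{Z}}_1^\top\vekg{\Gamma}^0(t)\}=g(t)\exp\{-\sum_j\tilde{Z}_{1j}\Gamma_j^0(t)\}$, so $\E\{Y_1(t)\}=g(t)\prod_j\phi(-\Gamma_j^0(t))$.

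Next I would condition on $\vek{Z}_1$, evaluate the inner expectation over $Y_1$ (equivalently over $T_1,C_1$), and use that the coordinates $\tilde{Z}_{1j}$ are i.i.d.~with moment generating function $\phi$, which makes every surviving expectation factorize into products of one-dimensional integrals against $\phi,\phi',\phi''$; here also $\vek{Z}_1^\top\vekg{\alpha}^0(t)=\tilde{\vek{Z}}_1^\top\vekg{\Sigma}^{1/2}\vekg{\alpha}^0(t)$. Writing $v_j(t):=\phi'(-\Gamma_j^0(t))/\phi(-\Gamma_j^0(t))$ and $u_j(t):=\phi''(-\Gamma_j^0(t))/\phi(-\Gamma_j^0(t))-v_j(t)^2$, and noting $u_j(t)=\tfrac{\d}{\d x}\{\phi'(x)/\phi(x)\}\big|_{x=-\Gamma_j^0(t)}$, the two identities to record are
\begin{displaymath}
  \E\{\tilde{\vek{Z}}_1 Y_1(t)\}=\E\{Y_1(t)\}\,\vek{v}(t),\qquad \E\{\tilde{\vek{Z}}_1 Y_1(t)\tilde{\vek{Z}}_1^\top\}=\E\{Y_1(t)\}\big\{\vek{v}(t)\vek{v}(t)^\top+\mathrm{diag}(u_j(t))\big\},
\end{displaymath}
the off-diagonal entries of the second coming from factorization across distinct coordinates and the diagonal entries from the $\phi''$ term.

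Finally I would assemble the pieces. The first identity gives $\vek{e}(t)=\vekg{\Sigma}^{1/2}\vek{v}(t)$, the second gives $\E\{\vek{Z}_1 Y_1(t)\vek{Z}_1^\top\}=\E\{Y_1(t)\}\,\vekg{\Sigma}^{1/2}\{\vek{v}(t)\vek{v}(t)^\top+\mathrm{diag}(u_j(t))\}\vekg{\Sigma}^{1/2}$, and $\vek{e}(t)\,\E\{Y_1(t)\vek{Z}_1^\top\}=\E\{Y_1(t)\}\,\vekg{\Sigma}^{1/2}\vek{v}(t)\vek{v}(t)^\top\vekg{\Sigma}^{1/2}$; subtracting, the rank-one term $\vek{v}(t)\vek{v}(t)^\top$ cancels and
\begin{displaymath}
  \E\big[\{\vek{Z}_1-\vek{e}(t)\}Y_1(t)\,\vek{Z}_1^\top\vekg{\alpha}^0(t)\big]=\E\{Y_1(t)\}\,\vekg{\Sigma}^{1/2}\mathrm{diag}(u_j(t))\,\vekg{\Sigma}^{1/2}\vekg{\alpha}^0(t).
\end{displaymath}
Integrating over $[0,\tau]$ then gives \eqref{eq:aalen-general-sigma}. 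For Gaussian features $\phi(x)=\e^{x^2/2}$, so $\phi'(x)/\phi(x)=x$ and $u_j(t)\equiv1$, which collapses $\mathrm{diag}(u_j(t))$ to the identity and delivers \eqref{eq:aalen-normal-sigma}. I expect the main obstacle to be the bookkeeping around this cancellation -- keeping careful track of which expectations factorize across coordinates, and verifying that the $\vek{e}(t)$-correction exactly removes the mean-driven rank-one part so that $\vekg{\delta}$ depends only on the variance-type quantities $u_j$ -- together with the mild regularity needed to make the manipulations valid (finiteness of $\phi$ at each $-\Gamma_j^0(t)$, $t\in[0,\tau]$, and integrability for the interchange of expectation and integration).
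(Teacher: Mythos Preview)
Your proposal is correct and follows essentially the same route as the paper's proof: both start from the Aalen-model form of \eqref{eq:cp-decom}, pass to $\tilde{\vek{Z}}_1$-coordinates, exploit the i.i.d.\ structure to factorize expectations into products of one-dimensional MGF integrals, and identify the surviving diagonal entries as $(\phi'/\phi)'$ evaluated at $-\Gamma_j^0(t)$. The only cosmetic difference is packaging: the paper bundles the computation into a single matrix $\vek{H}(t):=\{\E Y_1\,\E(\tilde{\vek{Z}}_1\tilde{\vek{Z}}_1^\top Y_1)-\E(\tilde{\vek{Z}}_1 Y_1)^{\otimes 2}\}/\E\{Y_1\}^2$ and argues directly that $\vek{H}$ is diagonal with entries $(\phi'/\phi)'|_{-\Gamma_j^0(t)}$, whereas you compute the first and second moments separately and exhibit the rank-one cancellation $\vek{v}\vek{v}^\top$ explicitly---same content, slightly more transparent bookkeeping on your side.
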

\begin{proof}
  Let $\Lambda_T$ and $\Lambda_C$ denote the cumulative baseline
  hazard functions associated with $T_1$ and $C_1$. Combining
  \eqref{eq:cp-decom} and \eqref{eq:def-of-aalenmodel}, we get
  \begin{align}
    \vekg{\delta} &= \E \Big\{\int_0^\tau \vek{Z}_1 \vek{Z}_1^\top Y_1(t) \vekg{\alpha}^0(t) \mathrm{d} t\Big\}  - \int_0^\tau \E\{\vek{Z}_1Y_1(t)\}^{\otimes 2} \E\{Y_1(t)\}^{-1} \vekg{\alpha}^0(t) \mathrm{d}t  \\
    &=  \int_0^\tau \vekg{\Sigma}^{1/2}\vek{H}(t) \vekg{\Sigma}^{1/2} \E\{Y_1(t)\} \vekg{\alpha}^0(t) \mathrm{d}t;
  \end{align}
  defining here
  \begin{displaymath}
    \vek{H}(t):=\frac{\E\{Y_1(t)\} \E\{\tilde{\vek{Z}}_1 \tilde{\vek{Z}}_1^\top Y_1(t)\}-\E\{\tilde{\vek{Z}}_1Y_1(t)\}^{\otimes 2}}{\E\{Y_1(t)\}^2}.
  \end{displaymath}
  Since we have
  $Y_1(t)=\exp[-\{\Lambda_T(t)+\Lambda_C(t)+\tilde{\vek{Z}}_1^\top
  \vekg{\Gamma}^0(t)\}]$ conditionally on $\tilde{\vek{Z}}_1$, independence of the components of $\tilde{\vek{Z}}_1$ clearly implies $ [\vek{H}(t)]_{ij} \equiv 0$ for $i
  \neq j$. For $i=j$, factor the conditional at-risk indicator as
  $Y_1(t)=Y_1^{(j)}(t)Y_1^{(-j)}(t)$ where
  $Y_1^{(j)}:=\exp\{-\tilde{Z}_{1j}\Gamma_j^0(t)\}$. Utilizing independence
  again, we get
\begin{displaymath}
  [\vek{H}(t)]_{jj}=\frac{\E\{Y_1^{(j)}(t)\} \E\{\tilde{Z}_{1j}^2 Y_1^{(j)}(t)\}-\E\{Y_1^{(j)}(t)\tilde{Z}_{1j}\}^2}{\E\{Y_1^{(j)}(t)\}^2}=\frac{\d}{\d x}\frac{\phi'(x)}{\phi(x)}\Big|_{x=-\Gamma_j^0(t)}
\end{displaymath}
This proves \eqref{eq:aalen-general-sigma}. To verify
\eqref{eq:aalen-normal-sigma}, simply note that the moment generating
function of a standard Gaussian is $\phi(x)=\exp(x^2/2)$ for which
$\mathrm{d}/\mathrm{d}x\,(\phi'(x) \phi(x)^{-1}) = 1$.
\end{proof}
From \eqref{eq:aalen-general-sigma}, a `simple' description of
$\vekg{\delta}$ (which does not involve factorizing a matrix in terms of
$\vekg{\Sigma}^{1/2}$) is available exactly when features are
Gaussian. Specifically, it holds for some fixed $K>0$ that
\begin{displaymath}
  \frac{\d}{\d x}\frac{\phi'(x)}{\phi(x)}=K,\quad  \textrm{ and }  \phi(0)=1,
\end{displaymath}
iff $\phi(x)=\exp(Kx^2/2)$, the moment generating function of a
centered Gaussian random variable.
\begin{proof}[Proof of Theorem \ref{thm:mainthm-aalen}]
  We apply Lemma \ref{prop:aalen-screen}. Denote by $\vek{v}_j$ the $j$th
  canonical basis vector in $\mathbb{R}^{p_n}$. Integrating by parts in \eqref{eq:aalen-normal-sigma}, we obtain
  \begin{displaymath}
    \delta_{j}= \vek{v}_j^\top \vekg{\Sigma} \int_0^\tau \vekg{\alpha}^0(t) \E\{Y_1(t)\} \d t=\vek{v}_j^\top \vekg\Sigma \int_0^\infty \vekg{\alpha}^0(t) \E\{\ssh(T_1 \land C_1 \land \tau \geq t)\} \d t=  \vek{v}_j^\top \vekg{\Sigma} \E\{\vek{A}^0(T_1\land C_1\land \tau)\}.
  \end{displaymath}
  By the assumptions, $|\vek{v}_j^\top \vekg{\Sigma} \E\{\vek{A}^0(T_1\land C_1 \land
  \tau)\}|\geq c_1n^{-\kappa}$ whenever $j \in \mc{M}^n$.  Thus
  $\mc{M}^n \subseteq \mc{M}_\delta^n$. For Gaussian $Z_{1j}$,
  we have $\ssh(|Z_{1j}|>s) \leq \exp(-s^2/2)$, and the SIS property then follows from~Lemma~\ref{prop:sure-pre-sreen}.
\end{proof}

\begin{proof}[Proof of Theorem \ref{thm:consistent-joint-single-index}]
  Recall that
  \begin{displaymath}
    \vekg{\Delta} = \E\Big[\int_0^\tau \{\vek{Z}_1-\vek{e}(t)\}^{\otimes 2} Y_1(t) \d t\Big].
  \end{displaymath}
Then
  \begin{displaymath}
    \vekg{\Delta} \vekg{\alpha}^0 = \int_0^\tau \frac{\E\{Y_1(t)\} \E\{Y_1(t)\vek{Z}_1 \vek{Z}_1^\top\vekg{\alpha}^0 \}-\E\{ Y_1(t) \vek{Z}_1^\top \vekg{\alpha}^0\}\E\{Y_1(t)\vek{Z}_1 \}}{\E\{Y_1(t)\}} \d t,
  \end{displaymath}
But by Lemma \ref{lem:fansong} and the assumption of random censoring, 
\begin{displaymath}
   \E\{Y_1(t)\vek{Z}_1 \vek{Z}_1^\top\vekg{\alpha}^0 \} =  \vekg{\Sigma} \vekg{\alpha}^0 \frac{\E\{(\vek{Z}_1^\top \vekg{\alpha}^0)^2 Y_1(t)\}}{\var(\vek{Z}_1^\top \vekg{\alpha}^0)}, \quad \textrm{and }  \E\{\vek{Z}_1 Y_1(t)\}=\vekg{\Sigma} \vekg{\alpha}^0\frac{\E\{Y_1(t)\vek{Z}_1^\top \vekg{\alpha}^0\}}{\var(\vek{Z}_1^\top \vekg{\alpha}^0)}.
\end{displaymath}
So we can construct a function $\xi$ such that $ \vekg{\Delta}
\vekg{\alpha}^0 =\vekg{\Sigma} \vekg{\alpha}^0 \int_0^\tau
\xi(\vekg{Z}_1^\top \vekg{\alpha}^0,t)\d t$ where $\int_0^\tau
\xi(\vek{Z}_1^\top \vekg{\alpha}^0,t)\d t \neq 0$, by nonsingularity of $\vekg{\Delta}$. Similarly, using
Lemma~\ref{lemma:cum-haz-decom}, we may construct a function $\zeta$
such that $\vekg{\delta} = \vekg{\Sigma} \vekg{\alpha}^0 \int_0^\tau
\zeta(\vek{Z}_1^\top \vekg{\alpha}^0,t) \d t$.  Taking $
\nu:=\int_0^\tau \zeta(\vek{Z}_1^\top \vekg{\alpha}^0,t)\d
t/\int_0^\tau \xi(t,\vek{Z}_1^\top \vekg{\alpha}) \d t$,
$\vekg{\beta}^0= \nu \vekg{\alpha}^0$ solves $\vekg{\Delta}
\vekg{\beta}^0 = \vekg{\delta}$.
\end{proof}

\bibliography{litt}

\end{document}